\newcommand{\equal}[1]{{\hypersetup{linkcolor=black}\thanks{#1}}}
\setlist{itemsep=0.2ex, topsep=1ex, partopsep=0ex, parsep=0.5ex, leftmargin=2.3ex}
\def\concat{\mathbin\Vert}
 \title[Heart Failure Prediction Through Explainable AI]{Interpretable Survival Analysis 
for Heart Failure Risk Prediction}
\author{
       \Name{Mike Van Ness}\equal{Equal contribution.}
       \Email{mvanness@stanford.edu}\\ 
       \addr Stanford University
       \AND
       \Name{Tomas Bosschieter}\footnotemark[1]
       \Email{tomasbos@stanford.edu}\\ 
       \addr Stanford University
       \AND
      \Name{Natasha Din, MD}
       \Email{nd1n@stanford.edu}\\ 
       \addr VA Palo Alto Health Care System
       \AND
       \Name{Andrew Ambrosy, MD}
       \Email{Andrew.P.Ambrosy@kp.org}\\ 
       \addr Kaiser Permanente Northern California Division of Research
       \AND
       \Name{Alexander Sandhu, MD} %
       \Email{ats114@stanford.edu}\\ 
       \addr Stanford Medicine
       \AND
       \Name{Madeleine Udell} 
       \Email{udell@stanford.edu}\\ 
       \addr Stanford University\\
       } 
\begin{document}

\maketitle

\begin{abstract}
  Survival analysis, or time-to-event analysis, is an important and widespread problem in healthcare research. 
  Medical research has traditionally relied on Cox models for survival analysis, due to their simplicity and interpretability.
  Cox models assume a log-linear hazard function as well as proportional hazards over time,
  and can perform poorly when these assumptions fail.
  Newer survival models based on machine learning avoid these assumptions and 
  offer improved accuracy, yet sometimes at the expense of model interpretability, 
  which is vital for clinical use.
  We propose a novel survival analysis pipeline that is both interpretable and competitive with state-of-the-art survival models.
  Specifically, we use an improved version of survival stacking to transform a survival analysis problem to a classification problem, ControlBurn to perform feature selection, and Explainable Boosting Machines to generate interpretable predictions.
  To evaluate our pipeline, we predict risk of heart failure using a large-scale EHR database. Our pipeline achieves state-of-the-art performance and provides interesting and novel insights about risk factors for heart failure. 
\end{abstract}
\begin{keywords}
explainability, healthcare, heart failure, survival analysis, generalized additive models
\end{keywords}

\section{Introduction} \label{sec:introduction}





Predicting individualized risk of developing a disease or condition, e.g.~heart failure, is a classical and important problem in medical research. While this risk modeling is sometimes accomplished using classification models, healthcare data often contains many right-censored samples: patients who are lost to followup before the end of the prediction window. 
Classification models cannot directly handle such censored data; 
instead, these samples are often discarded, losing valuable signal.
Moreover, the classification approach requires fixing a specific risk window 
(e.g., whether a patient has developed heart failure in the first five years after their initial visit),
and cannot exploit the time-to-event signal.

Survival analysis, in contrast, handles right-censoring by modeling the time until a patient develops a condition as a continuous random variable $T$ in order to estimate the survival curve $P(T > t)$. 
Survival analysis tasks have typically been modeled using Cox proportional hazards models  \citep{cox1972regression}, which assume that the log of the hazard function is a linear function of patient covariates plus a time-dependent intercept, resulting in proportional hazards over time.
Cox models account for right-censored data by optimizing the partial likelihood 
\begin{equation}\label{eq:Cox-partial-likelihood}
    L(\beta) = \prod_{i=1}^{K} \frac{\exp(x_i^\top \beta)}{\sum_{j \in R(t_i)} \exp(x_j^\top \beta)},
\end{equation}
where $K$ denotes the number of uncensored event times $t_1 \leq t_2 \leq \cdots \leq t_K$ and $R(t_i) = \{j : t_j \geq t_i\}$ represents the \textit{risk set} of patients that have not passed away yet at time $t_i$. Since the sum in the denominator of Eq.~\eqref{eq:Cox-partial-likelihood} is over \emph{all} patients in $R(t_i)$ regardless of censoring, Cox models can learn signal from both censored and uncensored patients. Cox models have become the standard in survival analysis due to their simplicity, interpretability, and natural handling of time-to-event data with censoring.

Naturally, the machine learning community has proposed many models that improve risk prediction accuracy past Cox models.
For example, the package \texttt{scikit-survival} \citep{polsterl2020scikit} contains several machine learning models for survival analysis, most of which do not assume a log-linear hazard function. Further, some recent machine learning approaches deal with right-censored data by adopting pseudovalues or pseudo-observations \citep{andersen2010pseudo}, which requires that an estimator for the survival curve on the complete data is available, e.g.~through the Kaplan-Meier estimator. Pseudovalues have enabled researchers to use machine learning regression models for survival analysis, further increasing the role of machine learning in the community.

One application of such survival analysis methods is heart failure risk prediction \citep{chicco2020machine, newaz2021survival, fahmy2021machine, panahiazar2015using}. Heart failure occurs when the heart loses the ability to relax or contract normally, 
leading to higher pressures within the heart or the inability to provide adequate output to the body.
Heart failure is projected to affect over 6 million individuals in the U.S., 
is often fatal (i.e., it is mentioned in 13.4\% of death certificates), and costs society over 30 billion dollars, 
all in the United States alone \citep{cdc_hf_importance}. 
Luckily, heart failure can often be prevented with preventive therapy 
if those at high risk can be identified in advance \citep{tsao2022heart}. 
Machine learning models can help identify those at high risk better than traditional survival analysis models by offering improved discrimination.
Nonetheless, one of the main roadblocks preventing machine learning approaches from becoming widely adopted in medical practice, including in heart failure risk prediction, is their black-box nature.

Interpretable machine learning methods hold the promise of delivering both high accuracy and interpretability, and thus have the potential to promote the adoption of machine learning methods in practice.
Post-hoc explainability methods such as 
SHapley Additive exPlanations (SHAP) \citep{lundberg2017unified} and 
Local Interpretable Model-agnostic Explanations (LIME) \citep{ribeiro2016model} are widely used, but provide potentially limited intelligibility as post-hoc methods \citep{kumar2020problems, van2022tractability, alvarez2018robustness, rahnama2019study}. 
Further, Explainable Boosting Machines \citep{lou2013accurate} have gained popularity due to their inherent interpretability as a generalized additive model \citep{hastie2017generalized} and state-of-the-art accuracy.

To handle right-censored data, learn from time-to-event signal, and increase model interpretability, we present a complete pipeline to perform interpretable survival analysis without the need to estimate survival times. Specifically, we present an improved, scalable version of survival stacking \citep{craig2021survival}, which generates classification training samples from the risk sets of the underlying survival analysis problem without having to estimate survival times (like pseudovalues do).
Additionally, to reduce feature correlation that can hurt interpretability, we use ControlBurn \citep{liu2021controlburn} for (nonlinear) feature selection. ControlBurn prunes a forest of decision trees, thereby selecting important risk factors from a potentially large collection of candidate features.
After survival stacking and feature selection, we use Explainable Boosting Machines \citep{lou2013accurate} to generate trustworthy yet accurate survival predictions with both global and local explanations.

To evaluate our pipeline, we study heart failure risk prediction using  electronic health record (EHR) data from a large hospital network with over 350,000 patients. Our experimental results show that our pipeline can predict heart failure more accurately than traditional survival analysis models and is comparable to other state-of-the-art machine learning models, while providing intelligibility. Our models both validate known risk factors for incident heart failure and identify novel risk factors.

\section{Related Work}\label{sec:related-work}

Several previous works have explored nonlinear extensions to classical survival models such as Cox proportional hazards models. 
A popular approach is to use the Cox partial likelihood as a loss function for common machine learning models, such as generalized additive models \citep{hastie1995generalized, utkin2022survnam}, boosting \citep{ridgeway1999state}, support vector machines \citep{van2007support}, and deep neural networks \citep{katzman2018deepsurv}. 
While generalized additive models are interpretable, using the Cox partial likelihood inherently enforces a proportional hazards assumptions, which might not be met in practice.
Other works discard the proportional hazards assumption and instead use machine learning methods to estimate the survival curve, popular examples including Random Survival Forests \citep{ishwaran2008random}, RNN-Surv \citep{giunchiglia2018rnn}, and DeepHit \citep{lee2018deephit}. However, these models' lack of interpretability is a challenge for implementation in clinical practice.

Other related works have explored casting survival analysis to a binary classification or regression problem.
Most notably, the use of pseudo-values \citep{andersen2010pseudo} enables converting a survival analysis problem to a regression problem by directly predicting Kaplan-Meier estimates of the survival curve.
Using pseudo-values does not require a proportional hazards assumption, and has been paired with various regression models in previous works \citep{rahman2021deeppseudo, zhao2019dnnsurv, rahman2022pseudo, rahman2022fair, feng2021bdnnsurv}.
Perhaps most similar to our paper, PseudoNAM \citep{rahman2021pseudonam} combines pseudo-values with Neural Additive Models \citep{agarwal2021neural} for interpretable survival analysis. 
While PseudoNAM provides a straightforward solution for interpretable survival analysis (but without potentially important interaction terms), 
it could be biased when censoring is not independent of the covariates \citep{binder2014pseudo},
a common situation in medical applications that censor on death.
Instead of pseudo-values, our paper uses survival stacking \citep{craig2021survival} to cast survival analysis as a classification problem, which avoids such bias. 
\section{Methods}\label{sec:methods}


We present a complete pipeline for interpretable and accurate survival analysis. We use improved survival stacking, feature selection via ControlBurn \citep{liu2021controlburn}, and interpretable classification via Explainable Boosting Machines \citep{lou2013accurate} for our pipeline, which is summarized in Figure~\ref{fig:main_fig}.
We present each part of our pipeline in more detail in the proceeding subsections.
For more background on survival analysis, see Appendix \ref{sec:surv_background}.



\begin{figure*}[t]
    \centering
    \includegraphics[width=0.8\linewidth]{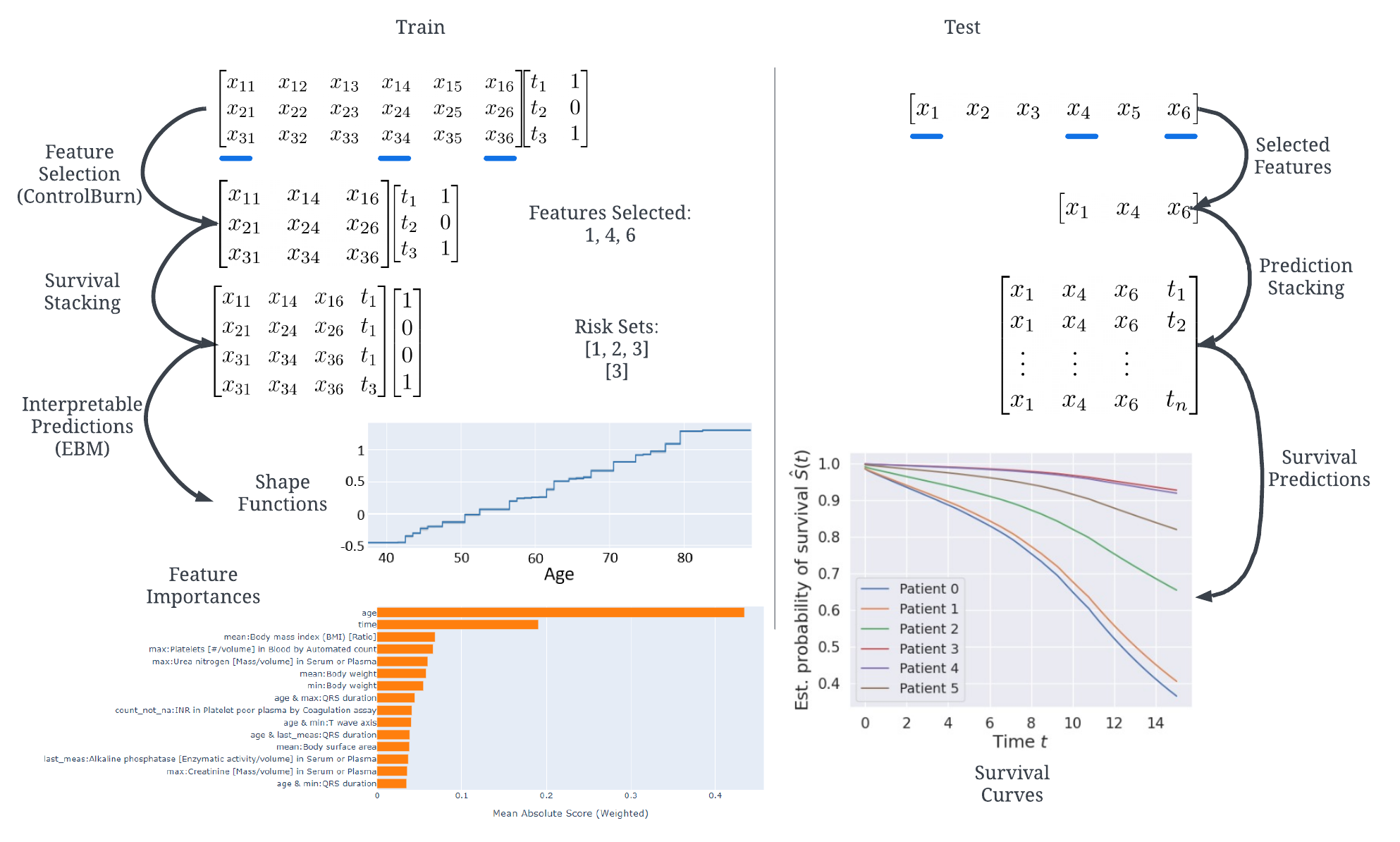}
    \caption{A summary of our interpretable survival analysis pipeline. During training (left), we use ControlBurn for feature selection (Section \ref{sec:ControlBurn}), survival stacking with subsampling to cast the survival data to classification data (Algorithm \ref{alg:survival_stacking}), and EBMs for generating feature importances and shape plots (Section \ref{sec:EBMs}). At test time (right), we use Survival Prediction (Algorithm \ref{alg:survival_prediction}) to generate survival curves using the trained ControlBurn and EBM models.}
    \label{fig:main_fig}
\end{figure*}

\subsection{Survival Stacking} \label{sec:survival-stacking}

As discussed in Section \ref{sec:introduction}, survival analysis is often a better solution for risk prediction than fixed-time classification, as survival analysis naturally handles censoring and incorporates time-to-event signal. 
However, much more research in machine learning, including interpretable machine learning, has focused on classification models.
Thus, casting a survival analysis problem to an equivalent classification problem would enable the use of such state-of-the-art classification models.

We describe an improved version of survival stacking \citep{craig2021survival} that allows for the use of binary classification models for survival analysis.
We assume survival data $\{(X_i, T_i, \delta_i)\}_{i=1}^n$ where $X_i$ is a vector of covariates,
$T_i$ is the time when the event of interest occurs (e.g. getting heart failure), 
and $\delta_i$ is the censoring indicator, indicating whether, at time $T_i$, patient $i$ reaches the event of interest or is lost to future observation before developing the condition. 
For each event time $t$ observed in the original data set, survival stacking adds all samples $X_i$ in the corresponding risk set $R(t) = \{i : T_i \geq t\}$ to a new ``stacked'' data set.
For sample $i \in R(t)$, the corresponding binary label in the stacked data set is $0$ if $T_i > t$, and $1$ if $T_i= t$. 
Additionally, an extra covariate is added to the stacked data set representing the time $t$ which defines the risk set $R(t)$.
Survival stacking works because training a binary classifier on the survival stacked data estimates the hazard function $\lambda(t \mid X)$ (the instantaneous risk conditioned on surviving until time $t$), which can be used to generate survival curves (see Appendix \ref{sec:surv_background}).
An example of survival stacking on a smaller data set can be found in \citep{craig2021survival}. 

We use an improved version of survival stacking in our pipeline, which is outlined in Algorithm~\ref{alg:survival_stacking}. 
Specifically, we make two modifications to survival stacking as in \citep{craig2021survival} to better scale to large survival data sets.
First, instead of defining a one-hot-encoded categorical variable to represent time, we define a single continuous time feature. This choice reduces the number of features, aiding computational efficiency and interpretability.
Second, stacking increases the number of samples quadratically, potentially creating computational hardships as well as a severe class imbalance in the case of a high censoring rate. We perform random undersampling of the majority class samples $\{i : T_i > t\} \subset R(t)$ to mitigate this issue.


\subsubsection{Survival Stacking Prediction}\label{sec:survival-stacking-prediction}

\begin{figure}[!h]
    \centering
    \includegraphics[width=\linewidth]{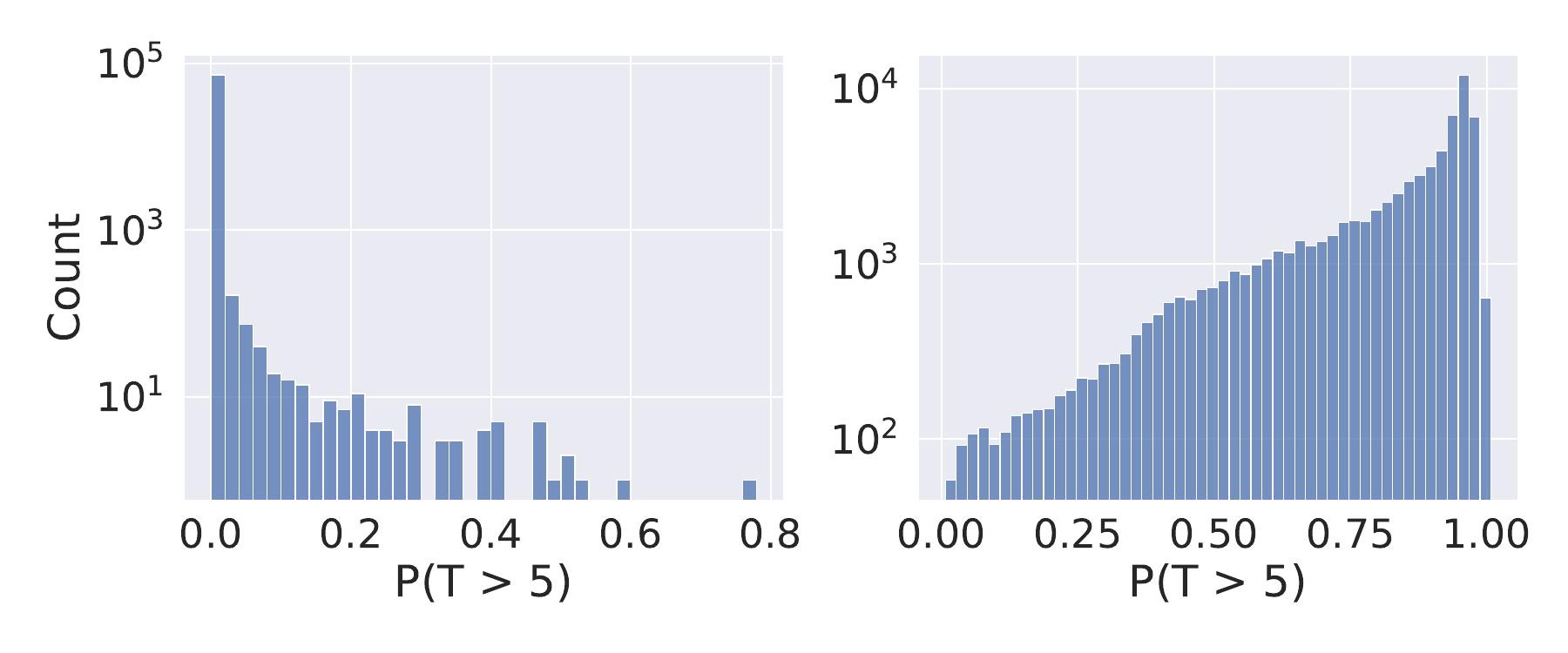}
    \caption{Distribution of predicted survival probabilities $S(t \mid X_i) = P(T_i > t)$ at $t=5$ across test patients. The left plot shows the survival probabilities using Eq.~\eqref{eq:surv_pred_craig} as in \citep{craig2021survival}, while the right plot shows Eq.~\eqref{eq:surv_pred_monte_carlo}.
    Our survival prediction method gives a reasonable distribution, while the method from \citep{craig2021survival} is incorrectly skewed torwards 0.
    }
    \label{fig:surv_predictions}
\end{figure}

After training a classification model $f$ on survival stacked data, we must convert the model's predicted probabilities to survival curves for patients during inference.
In \citep{craig2021survival}, the survival curve $S(t \mid X) = P(T > t \mid X)$ is estimated using
\begin{equation}
\label{eq:surv_pred_craig}
    \hat{S}(t \mid X) = \prod_{t_k \leq t} \Big( 1 - f(X \concat t_k)\Big).
\end{equation}
where $\concat$ represents concatenation to add the extra time covariate from survival stacking.
The motivation for Eq. \eqref{eq:surv_pred_craig} is that if the time variable $T$ is assumed to be discrete, taking on only the observed times $t_1, \ldots, t_k$ in the training set, then, assuming without loss of generality $t_1 \leq t_2 \leq \cdots \leq  t_k \leq t$, $S(t \mid X)$ can be written as the product of conditional survival probabilities up until time $t$:
\begin{equation}
\label{eq:surv_prob_discrete_times}
    S(t \mid X) = P(T > t \mid X) = \prod_{t_k \leq t} \Big( 1 - \lambda(t_k \mid X)\Big) 
\end{equation}
see Appendix \ref{}\citep{suresh2022survival} for details.
This motivates Eq.~\eqref{eq:surv_pred_craig} since the binary classifier $f$ estimates the hazard function $\lambda(t \mid X)$ with survival stacking.
However, the estimate in Eq.~\eqref{eq:surv_pred_craig} can become unstable in large sample, as demonstrated in Figure \ref{fig:surv_predictions}.
Further, we use a continuous time feature in our survival stacking algorithm, implying that this discrete time assumption may not be suitable. 
Thus, we propose a different method for predicting the survival curve, which is summarized in Algorithm \ref{alg:survival_prediction}. 
We first predict the cumulative hazard function $\Lambda(t \mid X) = \int_0^t \lambda(s \mid X) ds$ using Monte Carlo integration at $n$ uniform continuously sampled times $t_1, \ldots, t_n \leq t$:
\begin{equation}
\label{eq:chf_pred}
    \hat{\Lambda}(t \mid X) = \frac{t}{n} \sum_{i=1}^n f(X \concat t_i).
\end{equation}
After estimating $\Lambda(t \mid X)$, we can naturally estimate $S(t \mid X)$ using
\begin{equation}
\label{eq:surv_pred_monte_carlo}
    \hat{S}(t \mid X) = \exp\left(- \hat{\Lambda}(t \mid X) \right).
\end{equation}
Using this estimator is possible since we used a continuous time feature in survival stacking, which allows for such Monte Carlo integration.

\begin{algorithm}
\caption{Survival Stacking With Subsampling}
\begin{algorithmic}[1]\label{alg:survival_stacking}
    \STATE \textbf{Input:} Survival data $(X_1, T_1, Y_1),  \ldots,$\\ \hspace{1.25cm}$(X_n, T_n, Y_n)$, sampling ratio $\gamma$.
    \STATE \textbf{Output: } Classification data.
    \STATE $\text{event\_times} \gets \{T_i : Y_i = 1\}$, $\text{samples} \gets [~]$.
    \FOR{$t$ in event\_times}
        \STATE samples $\mathrel{+}= $ $\{(X_i \concat T_i, 1) : T_i = t, Y_i = 1\}$.
        \STATE risk\_set = uniform random sample with \\ \hspace{1.52cm} probability $\gamma$ from $\{i : T_i > t\}$.
        \STATE samples $\mathrel{+}= \{(X_i \concat T_i, 0) : i \in \text{risk\_set}\}$.
    \ENDFOR
    \RETURN samples.
\end{algorithmic}
\end{algorithm}

\begin{algorithm}
\label{alg:prediction}
\caption{Survival Prediction}
\begin{algorithmic}[1]\label{alg:survival_prediction}
    \STATE \textbf{Input:} Fitted classification model $f$, survival data test sample $X$, prediction time $t$.
    \STATE \textbf{Output: } Estimated survival probability \\ \hspace{1.3cm}$\hat{S}(t \mid X) = P(T > t \mid X)$.
    \STATE Sample $t_1, \ldots, t_n$ uniformly from $(0, t]$.
    \STATE Estimate CHF via Monte Carlo integration:\\ \hspace{1.3cm} $\hat{\Lambda}(t \mid X) = \frac{t}{n} \sum_{i=1}^n f(X 
 \concat t_i)$. 
    \RETURN $\hat{S}(t \mid X) = \exp(- \hat{\Lambda}(t \mid X))$.
    
\end{algorithmic}
\end{algorithm}


\subsection{Explainable Boosting Machines} \label{sec:EBMs}
Explainable Boosting Machines (EBMs) \citep{lou2012intelligible} are specific instances of Generalized Additive Models (GAMs) with interaction terms \citep{hastie2017generalized, lou2013accurate}:
\begin{equation}
\begin{split}
    g(\mathbb{E}[y]) = \beta_0 &+ f_1(x_1) + f_2(x_2) + \cdots \\
    &+ f_p(x_p) + \sum_{i,j} f_{i,j}(x_i,x_j),
\end{split}
\end{equation}
where $g$ denotes a link function (e.g.~identity for regression tasks and logistic for classification). The $f_i$'s are called the univariate shape functions, or `main effects', whereas the $f_{i,j}$'s encode the interaction terms between features $x_i$ and $x_j$ and are known as the `interaction effects' or `2D shape functions'. EBMs fit these shape functions by applying cyclic gradient boosting on shallow decision trees, see \citep{lou2013accurate} for technical details. 
This process includes a crucial purification process \citep{lengerich2020purifying} that ensures that each $f_i(x_i)$ encodes the full and sole effects of feature $x_i$ to the target in the model, and similarly so for higher-order terms, so that they form a functional ANOVA decomposition. As a result, shape functions do not necessarily show marginalised effects, unlike partial dependence plots (PDPs). 
Without purification, encoding the sole effect is not guaranteed, as an identifiability issue would arise: the contribution of $x_i$ could be moved freely between its main effect and its interaction terms without changing the model predictions \citep{lengerich2020purifying}.

EBMs provide interpretability via the plotting of main effects and interaction terms, as well as feature importances by averaging the absolute contributions of a feature to the target over all samples \citep{nori2019interpretml}.
Perhaps surprisingly, EBMs achieve comparable performance to state-of-the-art tabular prediction methods while providing more interpretability \citep{nori2021accuracy, lou2012intelligible, kamath2021explainable}.
They have been applied to a wide variety of fields, including high-risk applications in healthcare \citep{lengerich2022death, bosschieter2022using, sarica2021explainable, qu2022using}.



\subsection{Feature Selection} \label{sec:ControlBurn}

One challenge for interpretable classification models is feature correlation, especially when the data set is high-dimensional.
In particular, feature importance scores can be split between correlated features, resulting in potentially biased feature importance rankings \citep{liu2021controlburn}.
This problem is rather common in electronic health record (EHR) data, which often contains many highly correlated features.
For example, healthcare data typically includes features for a patient's height, weight, and BMI, which are highly correlated.
Additionally, generating multiple features from a single feature's time series, such as a patient's average and last measured value of a lab test, typically resulting in correlated features.
We thus consider feature selection as an important part in the interpretable survival analysis pipeline.

We choose to use ControlBurn \citep{liu2021controlburn} for feature selection, which is specifically designed to mitigate the bias induced by the correlated features.
ControlBurn builds a forest of shallow trees and uses a LASSO model \citep{tibshirani1996regression} to prune trees, keeping only the features that are left in the unpruned trees.
This process is similar to the traditional linear LASSO model, but is capable of capturing nonlinear relationships through ensembles of decision trees before pruning.
ControlBurn has been shown to be efficient and outperform other feature selection methods on data with correlated features \citep{liu2021controlburn, liu2022controlburn}, making it a good option for large-scale healthcare data sets. 

Since ControlBurn performs feature selection through classification, using the survival stacked data set discussed in Section \ref{sec:survival-stacking} to perform feature selection is a natural choice.
However, in some cases, it may be too computationally expensive to build the survival stacked data set before doing feature selection if both the number of samples and number of features are large.
In such large data cases, a reasonable alternative is to fix a future time $t$ and perform feature selection using classification data at the fixed time $t$.
\section{Experiments} 

\subsection{Data}

We evaluate our interpretable survival analysis pipeline through a large-scale study of incident heart failure risk prediction.
Specifically, we gather EHR data from a large-scale hospital network (name censored for anonymity) for a total of $n = 363,398$ patients and $p = 1,590$ features.
Additional information about cohort selection and characteristics are provided in Appendix \ref{sec:cohort_selection}.

\subsection{Setup}
For preprocessing, we standardize continuous features across the observed entries and use mean imputation (equivalent to 0-imputation after standardization), while we use one-hot encoding for categorical features.
Then, for feature selection, we contrast ControlBurn (discussed in Section \ref{sec:ControlBurn}) and linear LASSO \citep{tibshirani1996regression}.
We apply an 80/20 train-test split, and evaluate models with 5 trials each with different random seeds. The models we run are a Cox proportional hazards model (CoxPH, \citep{cox1972regression}), Random Survival Forest (RSF, \citep{ishwaran2008random}), Logistic Regression (LogReg), XGBoost \citep{chen2016xgboost}, and an Explainable Boosting Machine (EBM, \citep{lou2013accurate, nori2019interpretml}). Note that LogReg, XGBoost, and the EBM are fit on the stacked data built using Algorithm \ref{alg:survival_stacking}.
All models are evaluated through the cumulative/dynamic AUC and integrated Brier score as defined in scikit-survival \citep{polsterl2020scikit}. 
\subsection{Interpretable Survival Analysis Pipeline Performance}\label{sec:performance}

We now evaluate the predictive performance of our interpretable survival analysis pipeline, as summarized in Figure \ref{fig:main_fig}, showing the results in Table \ref{tab:accuracy_scores}.

\begin{table*}[!h]
    \caption{A comparison of survival analysis models in terms of \textit{time-dependent} AUC and Brier score for various feature selection strategies. Logistic regression, XGBoost, and EBM models are run using survival stacking as described in Section \ref{sec:methods}. Errors denote standard deviations over 9 trials across 3 different feature sets generated through different random seeds.}
    \centering

    \scalebox{0.75}{
    \begin{tabular}{l l |c c c c c c}
    \toprule
          \textbf{Metric} & \textbf{Feature Selection} & \textbf{CoxPH} & \textbf{CoxPH + Int} & \textbf{RSF} & \textbf{LogReg} & \textbf{XGBoost} & \textbf{EBM} \\
         \midrule
        \multirow{4}{*}{\textbf{AUC}}
        & Lasso (k = 10) &  $0.799 \pm 0.005$ & $0.817 \pm 0.000$ & $0.807 \pm 0.003$ & $0.804 \pm 0.002$ & $0.817 \pm 0.001$ & $\bm{0.821 \pm 0.001}$ \\
        & Lasso (k = 50) & $0.815 \pm 0.003$ & - & $0.805 \pm 0.009$ & $0.815 \pm 0.002$ & $0.829 \pm 0.001$ & $\bm{0.834 \pm 0.001}$ \\
       & ControlBurn (k = 10)  &  $0.799 \pm 0.005$ & $0.810 \pm 0.003$ & $0.812 \pm 0.004$ & $0.799 \pm 0.002$ & $\bm{0.819 \pm 0.002}$ & $\bm{0.823 \pm 0.002}$ \\
       & ControlBurn (k = 50) &  $0.814 \pm 0.005$ & & $0.806 \pm 0.017$ & $0.814 \pm 0.003$ & $\bm{0.832 \pm 0.001}$ & $\bm{0.834 \pm 0.001}$ \\
       \midrule 
       \multirow{4}{*}{\textbf{Brier}} & Lasso (k = 10) &  $\bm{0.033 \pm 0.002 }$ & $\bm{0.034 \pm 0.000}$ & $\bm{0.034 \pm 0.001}$ & $\bm{0.036 \pm 0.001}$ & $\bm{0.036 \pm 0.001}$ & $\bm{0.036 \pm 0.001}$ \\
        & Lasso (k = 50)    &  $\bm{0.033 \pm 0.001}$ & - & $\bm{0.035 \pm 0.001}$ & $\bm{0.035 \pm 0.001}$ & $\bm{0.035 \pm 0.001}$ & $\bm{0.035 \pm 0.001}$ \\
       & ControlBurn (k = 10) &  $\bm{0.034 \pm 0.002}$ & $\bm{0.034 \pm 0.000}$ & $\bm{0.034 \pm 0.001}$ & $\bm{0.036 \pm 0.001}$ & $\bm{0.036 \pm 0.001}$ & $\bm{0.035 \pm 0.001}$ \\
       & ControlBurn (k = 50) &  $\bm{0.033 \pm 0.001}$ & - & $\bm{0.034 \pm 0.001}$ & $\bm{0.036 \pm 0.001}$ & $\bm{0.035 \pm 0.001}$ & $\bm{0.035 \pm 0.001}$ \\
    \end{tabular}
    }
    \label{tab:accuracy_scores}
\end{table*}


There are several key observations from Table~\ref{tab:accuracy_scores}. First, there is a noticeable increase in AUC going from 10 to 50 features.
This demonstrates that using more features significantly boosts model discrimination, even though smaller feature sets are more often used in clinical practice.
For AUC, EBMs achieve the best performance, slightly better than XGBoost.
This aligns with previous research that suggests that EBMs achieve comparable performance with state-of-the-art models.
Additionally, this demonstrates that our survival stacking approach can be used with classification models to achieve performance at least as good as the performance of survival models.
For feature selection, in conjunction with such state-of-the-art models, ControlBurn slightly outperforms LASSO.
For an additional experiment comparing ControlBurn and LASSO, see Appendix \ref{sec:feature-selection}.
Lastly, all models have small and roughly equivalent Brier scores, indicating good calibration. 




\subsection{Interpretability Results} \label{sec:interpretable-results}
We present interpretable results generated by the EBM model after using ControlBurn to select $50$ features.
The 15 most important features, along with their importance scores, are shown in Figure~\ref{fig:EBM-ft-imp} in Appendix \ref{sec:appendix_exp_details}. The ``time" feature represents the time variable generated during the survival stacking, defining the risk sets. 


\subsubsection{Individual Shape Functions}
We show the shape functions of age, BMI, max serum creatinine value, and the time variable in Figure~\ref{fig:shapefuncs}. Each shape function represents the corresponding feature's individual contribution to heart failure risk hazard on a log-odds scale, adjusted for all other features. Note that an EBM's (i.e., a GAM's) shape functions differ from partial dependence plots as generated by e.g.~SHAP \citep{lundberg2017unified}, which performs marginalization. We make several key observations from the shape functions in Figure~\ref{fig:shapefuncs}, which we discuss one by one.
\begin{figure*}
    \centering
    \includegraphics[width=0.7\linewidth]{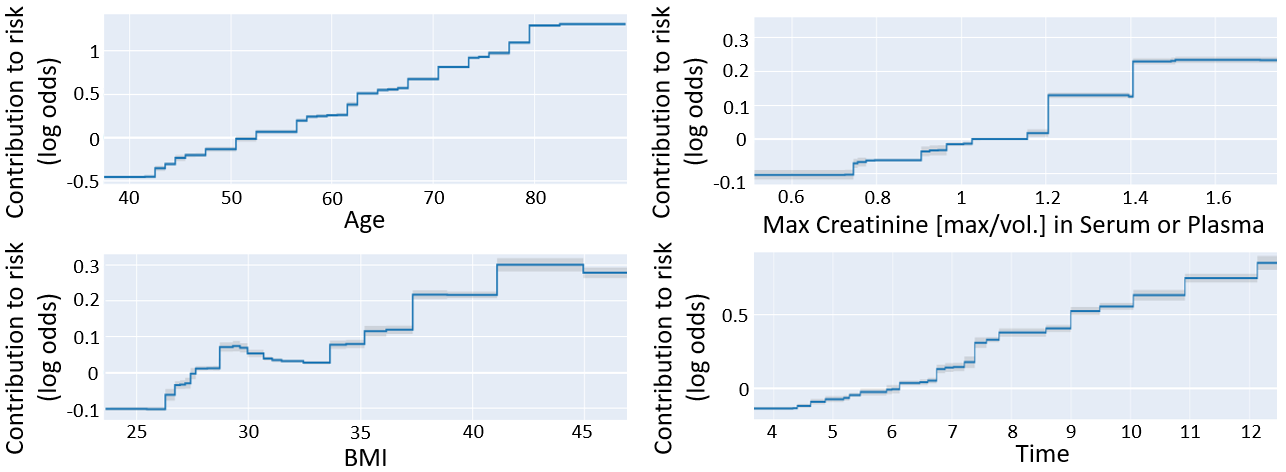}
    \caption{The EBM's shape functions for Age, BMI, max creatinine value, and time.}
    \label{fig:shapefuncs}
\end{figure*}

\paragraph{Shape function of Age}
The (log odds) contribution of age to the HF hazard appears to be a near-linear function, steadily increasing from $-0.5$ to $>1.3$. This increase is very significant on a log-odds scale, and further highlights the importance of Age (along with its large feature importance in Figure~\ref{fig:EBM-ft-imp} in the Appendix). This is aligned with prior data on the association between age and heart failure incidence \citep{khan201910}.

\paragraph{Shape function of BMI}
Perhaps surprisingly, the shape function of BMI is not strictly increasing, but seems to slightly drop after hitting a BMI of 30. One possible explanation might be that patients with a BMI of 30 or over are often advised to make lifestyle changes (e.g., increased exercise) upon being classified as obese (BMI threshold of 30), perhaps slightly mitigating the adverse effects of a high BMI. However, the risk seems to increase monotonically again once a BMI of 33 is reached. 

\paragraph{Shape function of Creatinine} Given that high creatinine values indicate abnormal renal function, and that kidney dysfunction is a major risk factor for heart failure, a monotonically increasing function might be expected. We indeed observe such monotonicity, although it is unclear why there appear to be increases in risk at the specific values 1.2 and 1.4.

\paragraph{Shape function of Time} While the shape function for Time might not yield a direct clinical interpretation, it does demonstrate the effects of survival stacking. 
Recall that the size of $R(t) = \{j : T_j \geq t\}$ monotonically decreases with respect to $t$; thus, the probability of being the next patient to get heart failure, i.e.~the hazard function, should increase over time.

\subsubsection{Interaction Terms}


In addition to the individual shape functions, we also find that there are very strong interaction terms influencing one's risk of heart failure. 
In Figure~\ref{fig:shapefuncs-interactions}, we plot two interaction terms with noticeably strong signal.
Each interaction plot is a heatmap, showing regions in the interaction space that most strongly contribute to prediction.

First, age and the QRS duration (i.e., the time interval for the heart ventricles to be electrically activated) are important features on their own (evidenced by the fact they are selected during feature selection), but their interaction is also crucial: having a wide QRS is a stronger risk factor for younger people than older people. There are several biologically plausible mechanisms to explain this observed interaction.

For one, widening of the QRS complex can occur during either normal aging of the heart's conduction system or with progressive ventricular enlargement and/or dysfunction. Thus, QRS prolongation may be more closely related to the pathophysiologic progression of heart failure in younger patients. Alternatively, the predominant phenotype of heart failure transitions from heart failure with a reduced to a preserved ejection fraction with increasing age. QRS duration has been more clearly linked to morbidity and mortality in heart failure with a reduced ejection fraction and is even a viable therapeutic target (i.e., cardiac resynchronization therapy).

Second, the interaction term between age and the T-wave axis appears important for prediction. Here, the abnormal electrical signal (the T-wave axis) is more predictive among younger patients. Note that while the QRS duration and T-wave axis are related, all interaction terms are also corrected for all other individual and interaction terms. We pick the last measured values for the QRS duration and T-wave axis given that these are often used in practice.



\begin{figure}
    \centering
    \includegraphics[width=1.\linewidth]{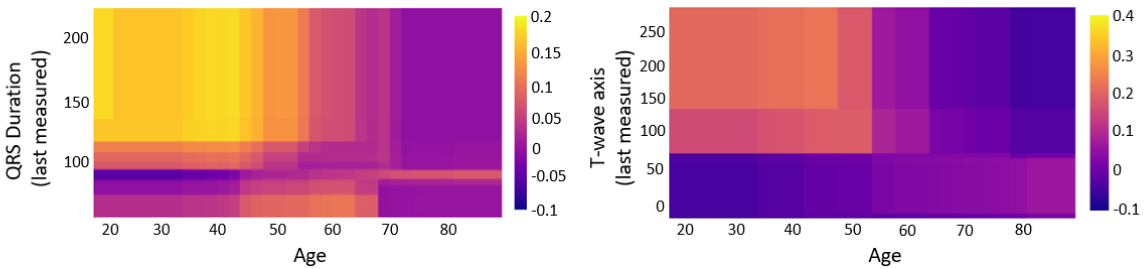}
    \caption{The EBM's shape functions for the interaction terms between (1) age and the QRS duration last measured, and (2) age and the T-wave axis last measured.}
    \label{fig:shapefuncs-interactions}
\end{figure}

\section{Discussion} \label{sec:discussion}
\subsection{Machine learning insights}
The results in Table~\ref{tab:accuracy_scores} seem to suggest that state-of-the-art classification models can outperform `traditional' survival models in terms of AUC when included in our pipeline, while the Brier scores are comparable. Furthermore, the EBM seems to slightly outperform XGBoost, while the EBM is an interpretable model, suggesting the EBM might be an appropriate choice for survival stacked data as well as tabular data more generally. 
This suggests that the pipeline encompassing (efficient) survival stacking, feature selection, and an interpretable model with interaction terms is appropriate for healthcare applications.

\subsection{Clinical insights}
The ControlBurn feature selection model as well as the EBM feature importances shed light onto the most important risk factors for heart failure.
First, we find that age is the most important risk factor, aligning with existing clinical understanding.  Our study also identifies novel risk factors for incident heart failure. We found alkaline phosphatase levels and platelet count, both markers of liver dysfunction, are also predictors of increased risk of heart failure. Other important risk factors are BMI, urea nitrogen, QRS duration, T-wave axis, and creatinine. 

We also identify novel interactions between age and electrocardiographic (EKG) features (QRS duration and T-wave axis) that have not been identified in prior heart failure literature (as far as we are aware).  The interactions between age and these EKG features suggest that abnormal EKG results may be predictive of increased heart failure risk when present in younger patients. 

Lastly, and perhaps surprisingly, ControlBurn did not pick up gender and race as important risk factors. This is interesting because this might suggest that gender and race have limited signal for predicting heart failure after controlling for other clinical features that may be more proximal to incident heart failure.




\paragraph{Limitations}


While the addition of subsampling helps survival stacking better scale to large data sets, there is still a significant challenge in terms of memory when building the survival stacked data set.
In our heart failure experiments, our training survival stacked data set has approximately 20.7 million rows from an initial training cohort of about 290,000 patients when using $\gamma = 0.01$ for subsampling.
For data sets larger than ours, it is possible that the data would have trouble fitting into memory without significantly more computational resources, in which case additional methodology such as mini-batching might be helpful. Additionally, we note that our data set comes from a single healthcare system; we have added relevant data characteristics in Table~\ref{tab:cohort_summary} for reference. Our results could be more robustly tested through experiments on additional healthcare systems.

\section{Conclusions}
We propose a novel pipeline for interpretable survival analysis that includes efficient survival stacking, feature selection through ControlBurn, and interpretable yet accurate predictions through EBMs. We show that this pipeline outperforms survival models such as the Cox model and Random Survival Forests, while also yielding interpretable results including feature importances and shape functions describing the individual and joint contributions of features to prediction. Our pipeline validates current understanding of heart failure, while also identifying novel risk factors for incident heart failure. We hope that our pipeline is useful to the community for healthcare applications more broadly.



\bibliography{refs}

\begin{thebibliography}{54}
\providecommand{\natexlab}[1]{#1}
\providecommand{\url}[1]{\texttt{#1}}
\expandafter\ifx\csname urlstyle\endcsname\relax
  \providecommand{\doi}[1]{doi: #1}\else
  \providecommand{\doi}{doi: \begingroup \urlstyle{rm}\Url}\fi

\bibitem[Agarwal et~al.(2021)Agarwal, Melnick, Frosst, Zhang, Lengerich, Caruana, and Hinton]{agarwal2021neural}
Rishabh Agarwal, Levi Melnick, Nicholas Frosst, Xuezhou Zhang, Ben Lengerich, Rich Caruana, and Geoffrey~E Hinton.
\newblock Neural additive models: Interpretable machine learning with neural nets.
\newblock \emph{Advances in Neural Information Processing Systems}, 34:\penalty0 4699--4711, 2021.

\bibitem[Ahmad et~al.(2017)Ahmad, Munir, Bhatti, Aftab, and Raza]{ahmad2017survival}
Tanvir Ahmad, Assia Munir, Sajjad~Haider Bhatti, Muhammad Aftab, and Muhammad~Ali Raza.
\newblock Survival analysis of heart failure patients: A case study.
\newblock \emph{PloS one}, 12\penalty0 (7):\penalty0 e0181001, 2017.

\bibitem[Alvarez-Melis and Jaakkola(2018)]{alvarez2018robustness}
David Alvarez-Melis and Tommi~S Jaakkola.
\newblock On the robustness of interpretability methods.
\newblock \emph{arXiv preprint arXiv:1806.08049}, 2018.

\bibitem[Andersen and Pohar~Perme(2010)]{andersen2010pseudo}
Per~Kragh Andersen and Maja Pohar~Perme.
\newblock Pseudo-observations in survival analysis.
\newblock \emph{Statistical methods in medical research}, 19\penalty0 (1):\penalty0 71--99, 2010.

\bibitem[Binder et~al.(2014)Binder, Gerds, and Andersen]{binder2014pseudo}
Nadine Binder, Thomas~A Gerds, and Per~Kragh Andersen.
\newblock Pseudo-observations for competing risks with covariate dependent censoring.
\newblock \emph{Lifetime data analysis}, 20:\penalty0 303--315, 2014.

\bibitem[Bosschieter et~al.(2022)Bosschieter, Xu, Lan, Lengerich, Nori, Sitcov, Souter, and Caruana]{bosschieter2022using}
Tomas~M Bosschieter, Zifei Xu, Hui Lan, Benjamin~J Lengerich, Harsha Nori, Kristin Sitcov, Vivienne Souter, and Rich Caruana.
\newblock Using interpretable machine learning to predict maternal and fetal outcomes.
\newblock \emph{arXiv preprint arXiv:2207.05322}, 2022.

\bibitem[CDC(2023)]{cdc_hf_importance}
CDC.
\newblock {Heart failure}, 2023.
\newblock URL \url{https://www.cdc.gov/heartdisease/heart_failure.htm}.

\bibitem[Chen and Guestrin(2016)]{chen2016xgboost}
Tianqi Chen and Carlos Guestrin.
\newblock Xgboost: A scalable tree boosting system.
\newblock In \emph{Proceedings of the 22nd acm sigkdd international conference on knowledge discovery and data mining}, pages 785--794, 2016.

\bibitem[Chicco and Jurman(2020)]{chicco2020machine}
Davide Chicco and Giuseppe Jurman.
\newblock Machine learning can predict survival of patients with heart failure from serum creatinine and ejection fraction alone.
\newblock \emph{BMC medical informatics and decision making}, 20\penalty0 (1):\penalty0 1--16, 2020.

\bibitem[Cox(1972)]{cox1972regression}
David~R Cox.
\newblock Regression models and life-tables.
\newblock \emph{Journal of the Royal Statistical Society: Series B (Methodological)}, 34\penalty0 (2):\penalty0 187--202, 1972.

\bibitem[Craig et~al.(2021)Craig, Zhong, and Tibshirani]{craig2021survival}
Erin Craig, Chenyang Zhong, and Robert Tibshirani.
\newblock Survival stacking: casting survival analysis as a classification problem.
\newblock \emph{arXiv preprint arXiv:2107.13480}, 2021.

\bibitem[Fahmy et~al.(2021)Fahmy, Rowin, Manning, Maron, and Nezafat]{fahmy2021machine}
Ahmed~S Fahmy, Ethan~J Rowin, Warren~J Manning, Martin~S Maron, and Reza Nezafat.
\newblock Machine learning for predicting heart failure progression in hypertrophic cardiomyopathy.
\newblock \emph{Frontiers in cardiovascular medicine}, 8:\penalty0 647857, 2021.

\bibitem[Feng and Zhao(2021)]{feng2021bdnnsurv}
Dai Feng and Lili Zhao.
\newblock Bdnnsurv: Bayesian deep neural networks for survival analysis using pseudo values.
\newblock \emph{arXiv preprint arXiv:2101.03170}, 2021.

\bibitem[Fotso et~al.(2019--)]{pysurvival_cite}
Stephane Fotso et~al.
\newblock {PySurvival}: Open source package for survival analysis modeling, 2019--.
\newblock URL \url{https://www.pysurvival.io/}.

\bibitem[Giunchiglia et~al.(2018)Giunchiglia, Nemchenko, and van~der Schaar]{giunchiglia2018rnn}
Eleonora Giunchiglia, Anton Nemchenko, and Mihaela van~der Schaar.
\newblock Rnn-surv: A deep recurrent model for survival analysis.
\newblock In \emph{Artificial Neural Networks and Machine Learning--ICANN 2018: 27th International Conference on Artificial Neural Networks, Rhodes, Greece, October 4-7, 2018, Proceedings, Part III 27}, pages 23--32. Springer, 2018.

\bibitem[Hastie and Tibshirani(1995)]{hastie1995generalized}
Trevor Hastie and Robert Tibshirani.
\newblock Generalized additive models for medical research.
\newblock \emph{Statistical methods in medical research}, 4\penalty0 (3):\penalty0 187--196, 1995.

\bibitem[Hastie(2017)]{hastie2017generalized}
Trevor~J Hastie.
\newblock Generalized additive models.
\newblock In \emph{Statistical models in S}, pages 249--307. Routledge, 2017.

\bibitem[Hripcsak et~al.(2015)Hripcsak, Duke, Shah, Reich, Huser, Schuemie, Suchard, Park, Wong, Rijnbeek, et~al.]{hripcsak2015observational}
George Hripcsak, Jon~D Duke, Nigam~H Shah, Christian~G Reich, Vojtech Huser, Martijn~J Schuemie, Marc~A Suchard, Rae~Woong Park, Ian Chi~Kei Wong, Peter~R Rijnbeek, et~al.
\newblock Observational health data sciences and informatics (ohdsi): opportunities for observational researchers.
\newblock In \emph{MEDINFO 2015: eHealth-enabled Health}, pages 574--578. IOS Press, 2015.

\bibitem[Ishwaran et~al.(2008)Ishwaran, Kogalur, Blackstone, and Lauer]{ishwaran2008random}
Hemant Ishwaran, Udaya~B Kogalur, Eugene~H Blackstone, and Michael~S Lauer.
\newblock Random survival forests.
\newblock 2008.

\bibitem[Jenkins(2005)]{jenkins2005survival}
Stephen~P Jenkins.
\newblock Survival analysis.
\newblock \emph{Unpublished manuscript, Institute for Social and Economic Research, University of Essex, Colchester, UK}, 42:\penalty0 54--56, 2005.

\bibitem[Kamath and Liu(2021)]{kamath2021explainable}
Uday Kamath and John Liu.
\newblock \emph{Explainable artificial intelligence: An introduction to interpretable machine learning}.
\newblock Springer, 2021.

\bibitem[Katzman et~al.(2018)Katzman, Shaham, Cloninger, Bates, Jiang, and Kluger]{katzman2018deepsurv}
Jared~L Katzman, Uri Shaham, Alexander Cloninger, Jonathan Bates, Tingting Jiang, and Yuval Kluger.
\newblock Deepsurv: personalized treatment recommender system using a cox proportional hazards deep neural network.
\newblock \emph{BMC medical research methodology}, 18\penalty0 (1):\penalty0 1--12, 2018.

\bibitem[Khan et~al.(2019)Khan, Ning, Shah, Yancy, Carnethon, Berry, Mentz, O’Brien, Correa, Suthahar, et~al.]{khan201910}
Sadiya~S Khan, Hongyan Ning, Sanjiv~J Shah, Clyde~W Yancy, Mercedes Carnethon, Jarett~D Berry, Robert~J Mentz, Emily O’Brien, Adolfo Correa, Navin Suthahar, et~al.
\newblock 10-year risk equations for incident heart failure in the general population.
\newblock \emph{Journal of the American College of Cardiology}, 73\penalty0 (19):\penalty0 2388--2397, 2019.

\bibitem[Kumar et~al.(2020)Kumar, Venkatasubramanian, Scheidegger, and Friedler]{kumar2020problems}
I~Elizabeth Kumar, Suresh Venkatasubramanian, Carlos Scheidegger, and Sorelle Friedler.
\newblock Problems with shapley-value-based explanations as feature importance measures.
\newblock In \emph{International Conference on Machine Learning}, pages 5491--5500. PMLR, 2020.

\bibitem[Lee et~al.(2018)Lee, Zame, Yoon, and Van Der~Schaar]{lee2018deephit}
Changhee Lee, William Zame, Jinsung Yoon, and Mihaela Van Der~Schaar.
\newblock Deephit: A deep learning approach to survival analysis with competing risks.
\newblock In \emph{Proceedings of the AAAI conference on artificial intelligence}, volume~32, 2018.

\bibitem[Lengerich et~al.(2020)Lengerich, Tan, Chang, Hooker, and Caruana]{lengerich2020purifying}
Benjamin Lengerich, Sarah Tan, Chun-Hao Chang, Giles Hooker, and Rich Caruana.
\newblock Purifying interaction effects with the functional anova: An efficient algorithm for recovering identifiable additive models.
\newblock In \emph{International Conference on Artificial Intelligence and Statistics}, pages 2402--2412. PMLR, 2020.

\bibitem[Lengerich et~al.(2022)Lengerich, Caruana, Nunnally, and Kellis]{lengerich2022death}
Benjamin~J Lengerich, Rich Caruana, Mark~E Nunnally, and Manolis Kellis.
\newblock Death by round numbers and sharp thresholds: how to avoid dangerous ai ehr recommendations.
\newblock \emph{medRxiv}, 2022.

\bibitem[Liu et~al.(2021)Liu, Xie, and Udell]{liu2021controlburn}
Brian Liu, Miaolan Xie, and Madeleine Udell.
\newblock {ControlBurn: Feature selection by sparse forests}.
\newblock In \emph{Proceedings of the 27th ACM SIGKDD Conference on Knowledge Discovery \& Data Mining}, pages 1045--1054, 2021.

\bibitem[Liu et~al.(2022)Liu, Xie, Yang, and Udell]{liu2022controlburn}
Brian Liu, Miaolan Xie, Haoyue Yang, and Madeleine Udell.
\newblock Controlburn: Nonlinear feature selection with sparse tree ensembles.
\newblock \emph{arXiv preprint arXiv:2207.03935}, 2022.

\bibitem[Lou et~al.(2012)Lou, Caruana, and Gehrke]{lou2012intelligible}
Yin Lou, Rich Caruana, and Johannes Gehrke.
\newblock Intelligible models for classification and regression.
\newblock In \emph{Proceedings of the 18th ACM SIGKDD international conference on Knowledge discovery and data mining}, pages 150--158, 2012.

\bibitem[Lou et~al.(2013)Lou, Caruana, Gehrke, and Hooker]{lou2013accurate}
Yin Lou, Rich Caruana, Johannes Gehrke, and Giles Hooker.
\newblock Accurate intelligible models with pairwise interactions.
\newblock In \emph{Proceedings of the 19th ACM SIGKDD international conference on Knowledge discovery and data mining}, pages 623--631. ACM, 2013.

\bibitem[Lundberg and Lee(2017)]{lundberg2017unified}
Scott~M Lundberg and Su-In Lee.
\newblock A unified approach to interpreting model predictions.
\newblock \emph{Advances in neural information processing systems}, 30, 2017.

\bibitem[Newaz et~al.(2021)Newaz, Ahmed, and Haq]{newaz2021survival}
Asif Newaz, Nadim Ahmed, and Farhan~Shahriyar Haq.
\newblock Survival prediction of heart failure patients using machine learning techniques.
\newblock \emph{Informatics in Medicine Unlocked}, 26:\penalty0 100772, 2021.

\bibitem[Nori et~al.(2019)Nori, Jenkins, Koch, and Caruana]{nori2019interpretml}
Harsha Nori, Samuel Jenkins, Paul Koch, and Rich Caruana.
\newblock Interpretml: A unified framework for machine learning interpretability.
\newblock \emph{arXiv preprint arXiv:1909.09223}, 2019.

\bibitem[Nori et~al.(2021)Nori, Caruana, Bu, Shen, and Kulkarni]{nori2021accuracy}
Harsha Nori, Rich Caruana, Zhiqi Bu, Judy~Hanwen Shen, and Janardhan Kulkarni.
\newblock Accuracy, interpretability, and differential privacy via explainable boosting.
\newblock In \emph{International Conference on Machine Learning}, pages 8227--8237. PMLR, 2021.

\bibitem[Panahiazar et~al.(2015)Panahiazar, Taslimitehrani, Pereira, and Pathak]{panahiazar2015using}
Maryam Panahiazar, Vahid Taslimitehrani, Naveen Pereira, and Jyotishman Pathak.
\newblock Using ehrs and machine learning for heart failure survival analysis.
\newblock \emph{Studies in health technology and informatics}, 216:\penalty0 40, 2015.

\bibitem[Pedregosa et~al.(2011)Pedregosa, Varoquaux, Gramfort, Michel, Thirion, Grisel, Blondel, Prettenhofer, Weiss, Dubourg, et~al.]{pedregosa2011scikit}
Fabian Pedregosa, Ga{\"e}l Varoquaux, Alexandre Gramfort, Vincent Michel, Bertrand Thirion, Olivier Grisel, Mathieu Blondel, Peter Prettenhofer, Ron Weiss, Vincent Dubourg, et~al.
\newblock Scikit-learn: Machine learning in python.
\newblock \emph{the Journal of machine Learning research}, 12:\penalty0 2825--2830, 2011.

\bibitem[P{\"o}lsterl(2020)]{polsterl2020scikit}
Sebastian P{\"o}lsterl.
\newblock scikit-survival: A library for time-to-event analysis built on top of scikit-learn.
\newblock \emph{The Journal of Machine Learning Research}, 21\penalty0 (1):\penalty0 8747--8752, 2020.

\bibitem[Qu et~al.(2022)Qu, Deng, Lin, Han, Chang, Ou, Nie, Mai, Wang, Gao, et~al.]{qu2022using}
Yanji Qu, Xinlei Deng, Shao Lin, Fengzhen Han, Howard~H Chang, Yanqiu Ou, Zhiqiang Nie, Jinzhuang Mai, Ximeng Wang, Xiangmin Gao, et~al.
\newblock Using innovative machine learning methods to screen and identify predictors of congenital heart diseases.
\newblock \emph{Frontiers in Cardiovascular Medicine}, 8:\penalty0 2087, 2022.

\bibitem[Rahman and Purushotham(2021)]{rahman2021pseudonam}
Md~Mahmudur Rahman and Sanjay Purushotham.
\newblock Pseudonam: A pseudo value based interpretable neural additive model for survival analysis.
\newblock \emph{UMBC Student Collection}, 2021.

\bibitem[Rahman and Purushotham(2022{\natexlab{a}})]{rahman2022fair}
Md~Mahmudur Rahman and Sanjay Purushotham.
\newblock Fair and interpretable models for survival analysis.
\newblock In \emph{Proceedings of the 28th ACM SIGKDD Conference on Knowledge Discovery and Data Mining}, pages 1452--1462, 2022{\natexlab{a}}.

\bibitem[Rahman and Purushotham(2022{\natexlab{b}})]{rahman2022pseudo}
Md~Mahmudur Rahman and Sanjay Purushotham.
\newblock Pseudo value-based deep neural networks for multi-state survival analysis.
\newblock \emph{arXiv preprint arXiv:2207.05291}, 2022{\natexlab{b}}.

\bibitem[Rahman et~al.(2021)Rahman, Matsuo, Matsuzaki, and Purushotham]{rahman2021deeppseudo}
Md~Mahmudur Rahman, Koji Matsuo, Shinya Matsuzaki, and Sanjay Purushotham.
\newblock Deeppseudo: Pseudo value based deep learning models for competing risk analysis.
\newblock In \emph{Proceedings of the AAAI Conference on Artificial Intelligence}, volume~35, pages 479--487, 2021.

\bibitem[Rahnama and Bostr{\"o}m(2019)]{rahnama2019study}
Amir Hossein~Akhavan Rahnama and Henrik Bostr{\"o}m.
\newblock A study of data and label shift in the lime framework.
\newblock \emph{arXiv preprint arXiv:1910.14421}, 2019.

\bibitem[Ribeiro et~al.(2016)Ribeiro, Singh, and Guestrin]{ribeiro2016model}
Marco~Tulio Ribeiro, Sameer Singh, and Carlos Guestrin.
\newblock Model-agnostic interpretability of machine learning.
\newblock \emph{arXiv preprint arXiv:1606.05386}, 2016.

\bibitem[Ridgeway(1999)]{ridgeway1999state}
Greg Ridgeway.
\newblock The state of boosting.
\newblock \emph{Computing science and statistics}, pages 172--181, 1999.

\bibitem[Sarica et~al.(2021)Sarica, Quattrone, and Quattrone]{sarica2021explainable}
Alessia Sarica, Andrea Quattrone, and Aldo Quattrone.
\newblock Explainable boosting machine for predicting alzheimer’s disease from mri hippocampal subfields.
\newblock In \emph{Brain Informatics: 14th International Conference, BI 2021, Virtual Event, September 17--19, 2021, Proceedings 14}, pages 341--350. Springer, 2021.

\bibitem[Suresh et~al.(2022)Suresh, Severn, and Ghosh]{suresh2022survival}
Krithika Suresh, Cameron Severn, and Debashis Ghosh.
\newblock Survival prediction models: an introduction to discrete-time modeling.
\newblock \emph{BMC medical research methodology}, 22\penalty0 (1):\penalty0 207, 2022.

\bibitem[Tibshirani(1996)]{tibshirani1996regression}
Robert Tibshirani.
\newblock Regression shrinkage and selection via the lasso.
\newblock \emph{Journal of the Royal Statistical Society: Series B (Methodological)}, 58\penalty0 (1):\penalty0 267--288, 1996.

\bibitem[Tsao et~al.(2022)Tsao, Aday, Almarzooq, Alonso, Beaton, Bittencourt, Boehme, Buxton, Carson, Commodore-Mensah, et~al.]{tsao2022heart}
Connie~W Tsao, Aaron~W Aday, Zaid~I Almarzooq, Alvaro Alonso, Andrea~Z Beaton, Marcio~S Bittencourt, Amelia~K Boehme, Alfred~E Buxton, April~P Carson, Yvonne Commodore-Mensah, et~al.
\newblock Heart disease and stroke statistics—2022 update: a report from the american heart association.
\newblock \emph{Circulation}, 145\penalty0 (8):\penalty0 e153--e639, 2022.

\bibitem[Utkin et~al.(2022)Utkin, Satyukov, and Konstantinov]{utkin2022survnam}
Lev~V Utkin, Egor~D Satyukov, and Andrei~V Konstantinov.
\newblock Survnam: The machine learning survival model explanation.
\newblock \emph{Neural Networks}, 147:\penalty0 81--102, 2022.

\bibitem[Van~Belle et~al.(2007)Van~Belle, Pelckmans, Suykens, and Van~Huffel]{van2007support}
Vanya Van~Belle, Kristiaan Pelckmans, Johan Suykens, and Sabine Van~Huffel.
\newblock Support vector machines for survival analysis.
\newblock In \emph{Proc. of the Third International Conference on Computational Intelligence in Medicine and Healthcare (CIMED2007)}, 2007.

\bibitem[Van~den Broeck et~al.(2022)Van~den Broeck, Lykov, Schleich, and Suciu]{van2022tractability}
Guy Van~den Broeck, Anton Lykov, Maximilian Schleich, and Dan Suciu.
\newblock On the tractability of shap explanations.
\newblock \emph{Journal of Artificial Intelligence Research}, 74:\penalty0 851--886, 2022.

\bibitem[Zhao and Feng(2019)]{zhao2019dnnsurv}
Lili Zhao and Dai Feng.
\newblock Dnnsurv: Deep neural networks for survival analysis using pseudo values.
\newblock \emph{arXiv preprint arXiv:1908.02337}, 2019.

\end{thebibliography}

\newpage
\appendix

\section{Survival Analysis}
\label{sec:surv_background}

Survival analysis aims to estimate the distribution of a time-to-event variable $T$ for some event of interest.
In the context of healthcare, survival analysis typically involves predicting if and when a patient will develop some condition or disease.
Formally, survival data for patient $i$ comes in the form $(X_i, T_i, \delta_i)$, 
where $X_i$ is a vector of covariate values,
$T_i$ is the time when the event occurs, 
and $\delta_i$ is the censoring indicator, 
which indicates if, at time $T_i$, patient $i$ develops the condition of interest or is lost to future observation without developing the condition. 
Survival analysis aims to estimate the probability that patient $i$ survives 
(that is, does not develop the condition of interest) by time $t$, 
$S(t \mid X_i) = P(T_i > t \mid X_i)$.
Estimating $S(t \mid X_i)$ is important for healthcare, 
as it can identify at-risk patients and help healthcare professionals provide appropriate treatments or risk reduction strategies.

Survival models typically aim to predict the hazard function $\lambda(t \mid X_i)$, which represents the instantaneous risk (probability) at time $t$ given $T_i > t$. 
For example, Cox models fit a log-linear function for the hazard function:
\begin{equation}
    \lambda(t \mid X_i) = \lambda_0(t) \exp(X_i^T \beta).
\end{equation}
After estimating the hazard function, the survival probability $S(t \mid X_i)$ can be estimated by using the relationship
\begin{equation}
\label{eq:surv_prob_exp}
    S(t \mid X_i) = \exp(- \Lambda(t \mid X_i)),
\end{equation}
where $\Lambda(t \mid X_i) = \int_0^t \lambda(s \mid X_i) ds$ is the cumulative hazard function. See \citep{jenkins2005survival} for more details.

\subsection{Survival Curves With Discrete Times}
\label{sec:surv_curve_discrete_times}

The survival probability $S(t \mid X) = P(T > t \mid X)$ represents the probability that a patient survives past time $t$, typing assuming that $T$ is continuous. 
However, sometimes it is reasonable to assume that $T$ is discrete, i.e. that patients can only reach the event of interest at a finite set of times. 
This is sometimes referred to as discrete-time modeling, see \citep{suresh2022survival}.

In such cases, we can derive $S(t \mid X)$ as a function of the hazard function $\lambda$ without the integral present in Eq.~\eqref{eq:surv_prob_exp}.
The result uses the following theorem:
\begin{theorem}
\label{thm:appendix_thm}
    Let $X$ be a discrete random variable, and let $t_1 \leq t_2$ be in the support of $X$. Then
    \begin{equation}
        P(X > t_2) = P(X > t_2 \mid X > t_1) P(X > t_1)
    \end{equation}
\end{theorem}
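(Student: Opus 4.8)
The plan is to reduce the statement to the definition of conditional probability, exploiting the nesting of the two tail events. First I would note that since $t_1 \leq t_2$, the event $\{X > t_2\}$ is contained in the event $\{X > t_1\}$, so that $\{X > t_2\} \cap \{X > t_1\} = \{X > t_2\}$. This containment is the only place the hypothesis $t_1 \leq t_2$ enters, and it does not actually use discreteness of $X$; discreteness is relevant only because the theorem is later applied to a time variable supported on the finite set of observed event times, where iterating the identity telescopes into Eq.~\eqref{eq:surv_prob_discrete_times}.

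Next, assuming $P(X > t_1) > 0$, I would expand the conditional probability on the right-hand side via its definition and substitute the set identity from the first step:
\begin{equation*}
    P(X > t_2 \mid X > t_1) = \frac{P\big(\{X > t_2\} \cap \{X > t_1\}\big)}{P(X > t_1)} = \frac{P(X > t_2)}{P(X > t_1)}.
\end{equation*}
Multiplying both sides by $P(X > t_1)$ gives $P(X > t_2) = P(X > t_2 \mid X > t_1)\,P(X > t_1)$, which is the claim. Applying this repeatedly with $t_1, t_2$ ranging over consecutive support points recovers $S(t \mid X) = \prod_{t_k \leq t}(1 - \lambda(t_k \mid X))$, tying it back to Eq.~\eqref{eq:surv_prob_exp}.

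The only subtlety — and what I would flag as the main (though minor) obstacle — is the degenerate case $P(X > t_1) = 0$, which occurs precisely when $t_1$ is the largest point of the support of $X$. Then $P(X > t_2 \mid X > t_1)$ is undefined by the usual ratio, but the containment $\{X > t_2\} \subseteq \{X > t_1\}$ forces $P(X > t_2) = 0$ as well, so the identity still holds under the convention that the right-hand side is read as $0$; equivalently, one restricts attention to $t_1$ with $P(X > t_1) > 0$, which is the only case needed for the product formula. With that caveat recorded, the argument is complete.
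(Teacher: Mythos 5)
Your proof is correct and follows essentially the same route as the paper's: expand $P(X > t_2 \mid X > t_1)$ by the definition of conditional probability, use the containment $\{X > t_2\} \subseteq \{X > t_1\}$ to simplify the intersection, and cancel $P(X > t_1)$. Your explicit handling of the degenerate case $P(X > t_1) = 0$ is a small refinement the paper omits, but it does not change the argument.
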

\begin{proof}
    \begin{align}
        P(X > t_2 &\mid X > t_1) P(X > t_1) \\
        &= \frac{P(X > t_2, X > t_1)}{P(X > t_1)} P(X > t_1) \\
        &= P(X > t_2)
    \end{align}
\end{proof}
Now, let $T$ have finite support, then 
by recursively applying Theorem \ref{thm:appendix_thm} to all times $t_1, \ldots, t_k \leq t$ in the support of $T$, we have
\begin{align}
    &S(t \mid X) = P(T > t \mid X) \\
    &= P(T>t_1 \mid X) \prod_{i=1}^{k} P(T > t_{i+1} | T>t_{i}, X)
\end{align}
Additionally, when $T$ has finite support, the hazard function can be written as
\begin{equation}
    \lambda(t \mid X) = P(T = t \mid T \geq t, X).
\end{equation}
Thus, we have
\begin{align}
    &S(t \mid X) = P(T > t \mid X) \\
    &= P(T>t_1 \mid X) \prod_{i=1}^{k} P(T > t_{i+1} | T>t_{i}, X) \\
    &= \prod_{i=1}^{k} P(T > t_{i} | T\geq t_{i}, X) \\
    &= \prod_{i=1}^k \Big( 1 - \lambda(t_i \mid X)\Big),
\end{align}
as in Eq.~\eqref{eq:surv_prob_discrete_times}. For further investigation, see \citep{suresh2022survival}.

\section{Cohort Details}
\label{sec:cohort_selection}

\begin{figure*}[t]
    \begin{minipage}{\linewidth}
        \centering
        \includegraphics[width=0.8\linewidth]{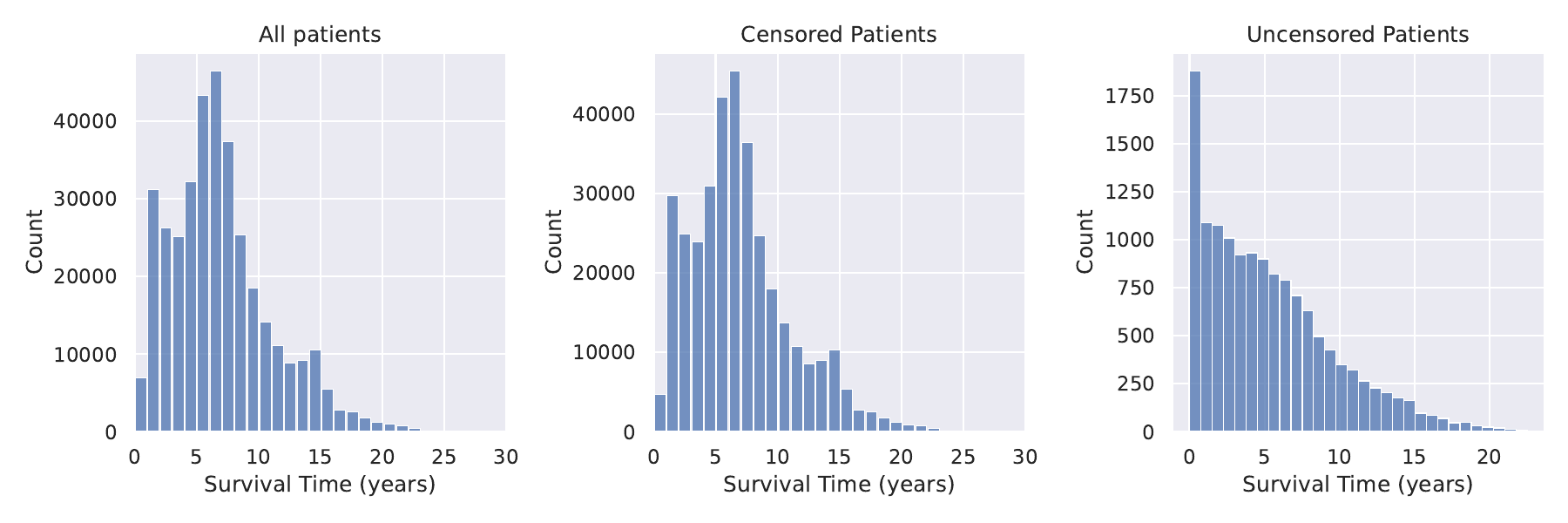}
        \label{fig:time_hists}
    \end{minipage}

    \begin{minipage}{\linewidth}
        \centering
        \includegraphics[width=0.8\linewidth]{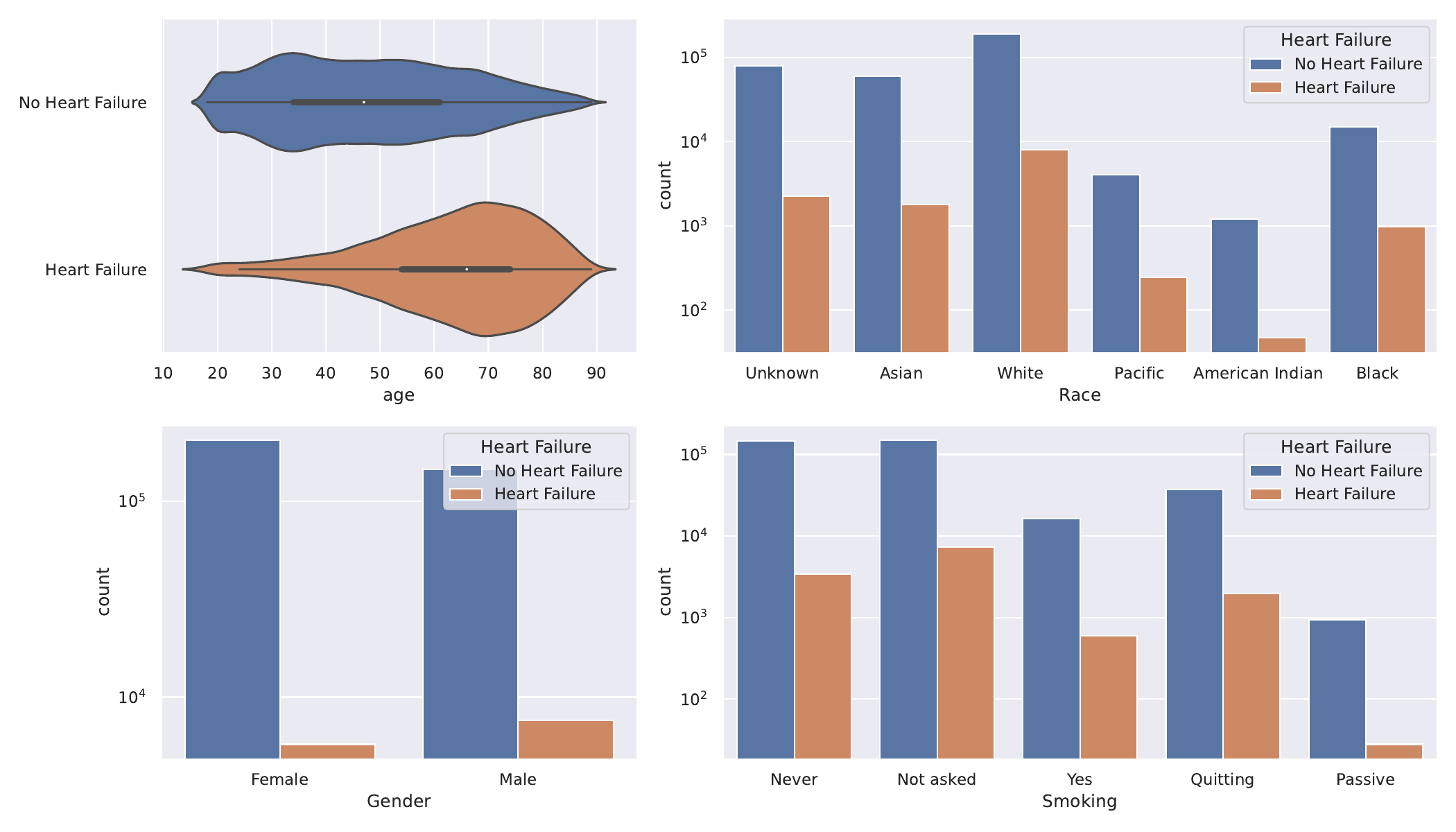}
        \label{fig:cohort_feature_plots}
    \end{minipage}

    \caption{Additional figures to illustrate cohort characteristics. \textbf{Top}: distribution of survival time for patients in our cohort, for all patients, censored patients, and uncensored (heart failure) patients. \textbf{Bottom}: plots showing the relationship between heart failure and the demographic features in the cohort.}
    \label{fig:extra_cohort_plots}
\end{figure*}

We evaluate our interpretable survival analysis pipeline through a large-scale study of incident heart failure risk prediction.
Specifically, we gather EHR data from a large-scale hospital network (name censored for anonymity) for a total of $n = 363,398$ patients and $p = 1,590$ features.
Data characteristics of our cohort are given in Table~\ref{tab:cohort_summary}, and an additional cohort exploratory data analysis is in Appendix \ref{sec:appendix_cohort_details}.

\begin{table*}[h]
    \centering
    \caption{Heart Failure Cohort Statistics. More details on each individual feature is found in Table~\ref{tab:cohort_stats_all_fts}.}
    \vspace{0.2cm}
    \scalebox{0.8}{
    \begin{tabular}{l|l|l|l|l}
    \toprule
    \textbf{Summary}            & \textbf{Total Features By Type}           & \textbf{Gender}         & \textbf{Race}             & \textbf{Age}                     \\
    \midrule
    $n = 363398$       & Measurements: 1203 & Male: 42.1\%   & White: 54.6\%   &  18-29: 15.8\%           \\
    $p = 1590$         & Conditions: 115    & Female: 57.9\% & Asian: 17.0\%   & 30-44: 28.5\%           \\
    HF Prevalence: 3.7\% & Drugs: 268         &                & Black: 4.4\%    &  45-59: 26.6\%         \\
                       & Demographics: 4         &                & Other: 1.5\%   & 60-74: 20.8\%         \\
                       &                    &                & Unknown: 22.5\% & $\geq 75$: 8.3\% \\
    \bottomrule
    \end{tabular}
    }
    \label{tab:cohort_summary}
\end{table*}

\subsection{Filtering and Censoring}

We consider patients above the age of 18 with at least 3 years of continuous observation in the healthcare system. 
We set the index date, i.e.~the prediction date, for each patient as 2 years after the start of continuous observation, and use any data prior to the index date as the raw input data. 
Additionally, we only consider patients with an index date before January 1, 2018 to allow for a sufficient follow-up period. 
Lastly, we exclude patients who developed HF prior to their index date.

For censoring, we right-censor patients at the time of death as well as at the last known encounter time if one of these occurs before a heart failure diagnosis. 
Additionally, since we set the index date as the earliest date with 2 years of prior observation, our cohort includes patients with long observation periods.
Therefore, we also apply right-censoring at 15 years for patients that have not already been censored or had heart failure, which censors an additional 4.5\% of patients in our cohort.

\subsection{Data Extraction and Features}

The EHR data that we use to construct our cohort is stored in the OMOP Common Data Model (CDM), an open community data standard for storing EHR data \citep{hripcsak2015observational}.
Thus, our data extraction pipeline is easily applicable to any EHR database stored using the OMOP CDM.
Below we list the feature types that we extract, along with the OMOP CDM table that the features come from.
\begin{itemize}
    \item \textbf{Measurements}: vital signs and lab measurements from the measurements table. For each measurement, we generate features for the min, max, mean, standard deviation, last observed value, and count of observed values over the patient's historical data prior to the index date.
    \item \textbf{Conditions}: binary features indicating whether or not a patient has had the given condition at any point before the index date, coming from the condition occurrence table.
    \item \textbf{Medications}: similar to conditions, but for prescriptions and over-the-counter drugs, coming from the drug exposure table.
    \item \textbf{Demographics}: features include the patient's age, gender, and race from the person table, and smoking history from the observation table.
\end{itemize}

Altogether, this yields 1590 features observed in the data set. These features capture current as well as historical data for patients at their index dates, which is an improvement over heart failure models that only use patient data at the index date \citep{khan201910, ahmad2017survival}.

\subsection{Heart Failure Labeling}

We generate survival labels $(T_i, \delta_i)$ for each patient $i$. 
The time $T_i$ represents the patient's event time, which is either the time the patient gets heart failure ($\delta_i = 1$), or the time they are lost to followup ($\delta_i = 0$). We determine whether or not a patient got heart failure using a curated OMOP CDM concept set, which defines all concepts in the raw data associated with heart failure diagnosis. 
This concept set was verified by clinicians, but given the large sample size, a small mislabeling risk cannot be ruled out.

\subsection{Cohort Statistics}
\label{sec:appendix_cohort_details}
For a list of all cohort features and their $10^{\textnormal{th}}, 25^{\textnormal{th}}, 50^{\textnormal{th}}, 75^{\textnormal{th}}$, and $90^{\textnormal{th}}$ percentiles, please see Table~\ref{tab:cohort_stats_all_fts}. 
Additionally, in Figure~\ref{fig:extra_cohort_plots}, we show additional plots to visualize the distribution of survival time as well as the demographic features in the cohort. 



\section{Additional Experiment Details}
\label{sec:appendix_exp_details}

We use the following packages for our experiments:
\begin{itemize}
    \item \textbf{InterpretML}: for EBM implementation \citep{nori2019interpretml}.
    \item \textbf{scikit-survival}: for Cox models, as well as for calculating time-dependent AUC and brier metrics \citep{polsterl2020scikit}.
    \item \textbf{pysurvival}: for Random Survival Forest implementation \citep{pysurvival_cite}.
    \item \textbf{scikit-learn}: for logistic regression as well as LASSO for feature selection \citep{pedregosa2011scikit}.
    \item \textbf{XGBoost}: for running XGBoost \citep{chen2016xgboost}.
\end{itemize}
Additionally, we use the following hyperparameters for our models:
\begin{itemize}
    \item \textbf{EBM}: We use \verb|outer_bags=25| and \verb|inner_bags=10| to ensure robustness, and further use \verb|max_bins=64, interactions=20|, \verb|max_rounds=5000| for expressivity.
    \item \textbf{XGBoost}: We use default hyperparameters.
    \item \textbf{LogReg}: The \verb|saga| solver is used, motivated by the high-dimensionality of our data.
    \item \textbf{CoxPH}: we use the default hyperparameters from scikit-survival.
    \item \textbf{RSF}: we use the default hyperparameters in pysurvival: \verb|max_features=sqrt|, \verb|min_node_size=10|, \verb|sample_size_pct=0.63|.
\end{itemize}

\section{Feature Selection}\label{sec:feature-selection}

\begin{figure}[t]
    \centering
    \includegraphics[width=0.8\linewidth]{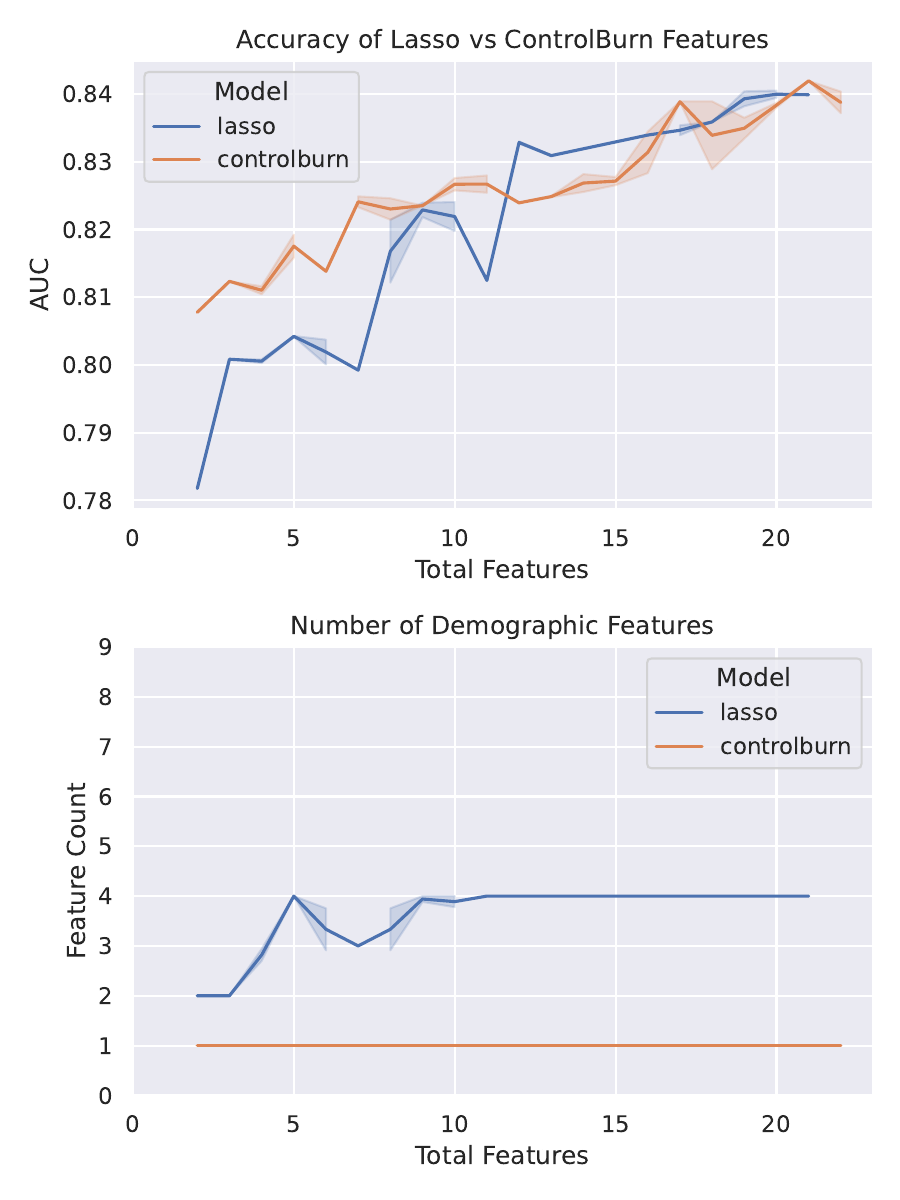}
    \caption{Comparison of linear LASSO and ControlBurn for feature selection. Each model is run for 30 regularization parameters, with each run consisting of 5 trials with different random seeds.
    Each trial is evaluated by running an XGBoost model on the selected features. For categorical features, if any of the one hot encoded features are selected, all are included and are collectively counted as 1 feature. The left plot shows the AUC for 5-year risk prediction for each model. The right plot shows what types of features are being selected. Errors represent standard error across trials if multiple trials result in the same number of features.}
    \label{fig:num_features-to-AUC}
\end{figure}

We assess the ability of ControlBurn (discussed in Section~\ref{sec:ControlBurn}) and the linear LASSO \citep{tibshirani1996regression} to select useful features for predicting heart failure. 
Since the survival stacked data contains not only many samples but also many features, we run feature selection on a 5-year heart failure classification task as a proxy for the full survival analysis problem. 
We compare the performance of ControlBurn and LASSO by evaluating the performance of an XGBoost model trained using the selected features. The results as a function of the number of features selected are shown in Figure \ref{fig:num_features-to-AUC}.

The top plot in Figure~\ref{fig:num_features-to-AUC} demonstrates that ControlBurn selects better features than LASSO in terms of the resulting 5-year risk predictions when the number of features is roughly less than 10.
When the number of features increases past 10, the methods perform similarly.
The bottom plot in Figure~\ref{fig:num_features-to-AUC} helps explain the difference between ControlBurn and LASSO in terms of what types of features are being selected. 
Evidently, ControlBurn selects measurement features more often, while LASSO first selects demographic features before selecting more measurement features.
Since ControlBurn performs comparably to LASSO overall, and better for smaller feature sets, this suggests that demographic features such as gender and race, which the LASSO model selects, may provide no more signal than additional measurement features.
One possible explanation for this behavior is that the demographic features might provide more linear signal and are thus selected by LASSO.
On the other hand, the measurement features often appear to contain comparatively stronger nonlinear signal, see e.g.~Figure~\ref{fig:EBM-ft-imp}, which ControlBurn tends to capture.
These results demonstrate the utility of ControlBurn for selecting an appropriate subset of features for nonlinear prediction models, especially when generating small feature sets.

\section{Additional Experiment Results}

Along with the shape functions and interaction terms shown in Figures \ref{fig:shapefuncs} and \ref{fig:shapefuncs-interactions}, we present the most important features by global feature importance in Figure \ref{fig:EBM-ft-imp}. 
Age appears to be the most important feature, while the feature importance plot also recognises other traditional risk factors as discussed in depth in Section~\ref{sec:discussion}. 

\begin{figure*}[h!]
    \centering
        \includegraphics[width=\linewidth]{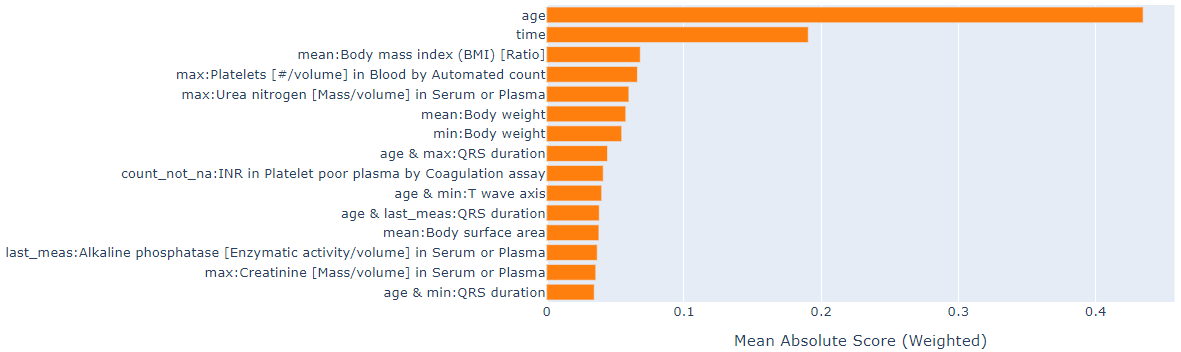}
        \caption{The EBM's feature importances, measured through averaging the feature's absolute contributions to risk over all samples. Age is a significantly better predictor than the other features. Additionally, most interaction terms include age, suggesting age is also an important feature in combination with other features after accounting for its individual contribution.}
    \label{fig:EBM-ft-imp}
\end{figure*}

\onecolumn
\begin{center}
\fontsize{7}{9}\selectfont
\begin{longtable}{lrrrrr}
\caption{This table shows the $10^{\textnormal{th}}, 25^{\textnormal{th}}, 50^{\textnormal{th}}, 75^{\textnormal{th}}$, and $90^{\textnormal{th}}$ percentiles of all features picked up by at least one feature selection method. } \\
\toprule
{\textbf{Percentile}} &     \textbf{10}$^{\textnormal{th}}$ &     \textbf{25}$^{\textnormal{th}}$ &     \textbf{50}$^{\textnormal{th}}$ &     \textbf{75}$^{\textnormal{th}}$ &     \textbf{90}$^{\textnormal{th}}$ \\
\midrule
last\_meas:Pulse rate                                                                                                                   &    60.00 &    66.00 &    75.00 &    84.00 &    94.00 \\
mean:Prothrombin time (PT)                                                                                                             &    11.67 &    12.60 &    13.60 &    14.70 &    17.08 \\
min:Venous oxygen saturation measurement                                                                                               &    45.00 &    63.00 &    73.00 &    84.10 &    93.20 \\
count\_not\_na:Aspartate aminotransferase measurement                                                                                    &     1.00 &     1.00 &     1.00 &     1.00 &     3.00 \\
mean:Urea nitrogen [Mass/volume] in Serum or Plasma                                                                                    &     8.50 &    11.00 &    14.00 &    17.00 &    22.00 \\
min:Diastolic blood pressure                                                                                                           &    57.00 &    63.00 &    70.00 &    78.00 &    84.00 \\
mean:Thyroid stimulating hormone measurement                                                                                           &     0.72 &     1.10 &     1.65 &     2.43 &     3.42 \\
cond:Gastroesophageal reflux disease                                                                                                   &     0.00 &     0.00 &     0.00 &     0.00 &     0.00 \\
max:Cholesterol.total/Cholesterol in HDL [Mass Ratio] in Serum or Plasma                                                               &     3.20 &    51.00 &   115.00 &   150.00 &   183.00 \\
mean:Oxygen [Partial pressure] in Arterial blood                                                                                       &    82.12 &   119.05 &   177.92 &   231.00 &   283.36 \\
mean:Erythrocyte distribution width [Ratio] by Automated count                                                                         &    12.50 &    13.00 &    13.60 &    14.50 &    16.00 \\
mean:Chloride [Moles/volume] in Serum or Plasma                                                                                        &    99.50 &   101.43 &   103.00 &   105.00 &   107.00 \\
max:Triglyceride [Moles/volume] in Serum or Plasma                                                                                     &    50.00 &    70.00 &   104.00 &   160.00 &   241.00 \\
max:Glomerular filtration rate/1.73 sq M.predicted  &  \multirow{2}{*}{57.00} &    \multirow{2}{*}{60.00} &    \multirow{2}{*}{66.00} &    \multirow{2}{*}{102.00} &   \multirow{2}{*}{120.00} \\
\hspace{.5cm} in Serum, Plasma or Blood by Creatinine-based formula (MDRD) &     &    &   &  &   \\
count\_not\_na:Chloride [Moles/volume] in Serum or Plasma                                                                                &     1.00 &     1.00 &     2.00 &     4.00 &    11.00 \\
max:Base excess measurement                                                                                                            &     0.30 &     0.80 &     1.90 &     3.70 &     6.20 \\
min:T wave axis                                                                                                                        &   -10.00 &    10.00 &    30.00 &    48.00 &    68.00 \\
last\_meas:Triglyceride [Moles/volume] in Serum or Plasma                                                                               &    48.00 &    66.00 &    97.00 &   146.00 &   215.00 \\
count\_not\_na:Thyroid stimulating hormone measurement                                                                                   &     1.00 &     1.00 &     1.00 &     1.00 &     1.00 \\
mean:High density lipoprotein measurement                                                                                              &    36.00 &    43.00 &    53.00 &    66.00 &    79.00 \\
mean:Basophils/100 leukocytes in Blood by Automated count                                          &     0.10 &     0.27 &     0.40 &     0.60 &     0.90 \\
last\_meas:Creatine kinase.MB [Mass/volume] in Blood                                                                                    &     0.50 &     0.71 &     1.58 &     3.19 &    10.50 \\
last\_meas:Glucose [Mass/volume] in Serum or Plasma                                                                                     &    84.00 &    91.00 &   100.00 &   116.00 &   145.00 \\
mean:Hemoglobin A1c/Hemoglobin.total in Blood                                                                                          &     5.00 &     5.30 &     5.70 &     6.30 &     7.67 \\
last\_meas:Q-T interval corrected                                                                                                       &   391.00 &   404.00 &   419.00 &   436.00 &   455.00 \\
min:Prothrombin time (PT)                                                                                                              &    11.30 &    12.20 &    13.10 &    13.90 &    14.90 \\
mean:Carbon dioxide, total [Moles/volume] in Serum or Plasma                                                                           &    23.25 &    25.00 &    27.00 &    28.50 &    30.00 \\
min:Body weight                                                                                                                        &  1888.00 &  2160.00 &  2560.00 &  3040.00 &  3555.20 \\
last\_meas:Left ventricular Ejection fraction                                                                                           &    53.20 &    57.70 &    62.10 &    66.30 &    70.20 \\
min:Body surface area                                                                                                                  &     1.54 &     1.67 &     1.84 &     2.04 &     2.22 \\
max:Left ventricular Ejection fraction                                                                                                 &    54.20 &    58.00 &    62.70 &    67.00 &    70.70 \\
max:Platelets [\#/volume] in Blood by Automated count                                                                                   &   173.00 &   208.00 &   252.00 &   309.00 &   385.00 \\
min:Urea nitrogen [Mass/volume] in Serum or Plasma                                                                                     &     6.00 &     9.00 &    12.00 &    16.00 &    20.00 \\
mean:Creatinine [Mass/volume] in Serum or Plasma                                                                                       &     0.65 &     0.76 &     0.90 &     1.07 &     1.26 \\
std:P wave axis                                                                                                                        &     2.12 &     4.60 &     9.37 &    17.79 &    31.11 \\
count\_not\_na:Venous oxygen saturation measurement                                                                                      &     1.00 &     1.00 &     1.00 &     2.00 &     5.00 \\
std:Sodium [Moles/volume] in Serum or Plasma                                                                                           &     0.58 &     1.15 &     2.00 &     2.83 &     3.67 \\
min:Erythrocyte distribution width [Ratio] by Automated count                                                                          &    12.30 &    12.70 &    13.30 &    14.00 &    15.00 \\
min:Chloride measurement, blood                                                                                                        &   101.00 &   105.00 &   108.00 &   111.00 &   113.00 \\
max:Cholesterol in LDL [Mass/volume] in Serum or Plasma by Direct assay                                                                &    72.00 &    91.00 &   114.00 &   139.00 &   165.00 \\
max:Glucose [Mass/volume] in Serum or Plasma                                                                                           &    87.00 &    95.00 &   110.00 &   146.00 &   200.00 \\
last\_meas:Lymphocytes/100 leukocytes in Blood by Automated count                                   &     9.10 &    15.70 &    24.10 &    31.80 &    38.30 \\
std:Q-T interval corrected                                                                                                             &     0.00 &     0.00 &     0.00 &     6.59 &    16.74 \\
mean:P-R Interval                                                                                                                      &   136.00 &   148.00 &   162.00 &   180.00 &   198.00 \\
count\_not\_na:INR in Platelet poor plasma by Coagulation assay                                                                          &     1.00 &     1.00 &     2.00 &     4.00 &     9.00 \\
mean:Systolic blood pressure                                                                                                           &   106.00 &   113.50 &   123.00 &   133.33 &   144.33 \\
mean:Body height                                                                                                                       &    61.00 &    63.00 &    66.00 &    69.00 &    72.00 \\
last\_meas:Venous oxygen saturation measurement                                                                                         &    51.00 &    67.22 &    76.00 &    86.40 &    94.30 \\
min:Cholesterol [Mass/volume] in Serum or Plasma                                                                                       &   130.00 &   152.00 &   178.00 &   204.00 &   231.00 \\
max:QRS duration                                                                                                                       &    78.00 &    84.00 &    90.00 &   100.00 &   112.00 \\
std:Glucose [Mass/volume] in Serum or Plasma                                                                                           &     2.12 &     7.07 &    16.26 &    28.20 &    46.05 \\
drug:aspirin 325 MG Oral Tablet                                                                                                        &     0.00 &     0.00 &     0.00 &     0.00 &     0.00 \\
mean:Globulin [Mass/volume] in Serum                                                                                                   &     2.50 &     2.90 &     3.40 &     3.80 &     4.20 \\
mean:Glucose [Mass/volume] in Serum or Plasma                                                                                          &    86.00 &    93.00 &   103.50 &   121.00 &   147.00 \\
std:Pulse rate                                                                                                                         &     2.83 &     5.29 &     8.12 &    11.31 &    14.96 \\
max:Globulin [Mass/volume] in Serum                                                                                                    &     2.60 &     3.00 &     3.50 &     4.00 &     4.50 \\
max:Hemoglobin [Mass/volume] in Blood                                                                                                  &    11.60 &    12.70 &    13.70 &    14.80 &    15.70 \\
last\_meas:Cholesterol non HDL [Mass/volume] in Serum or Plasma                                                                         &    79.00 &    96.00 &   121.00 &   149.00 &   177.00 \\
min:Oxygen saturation measurement                                                                                                      &    28.00 &    42.00 &    61.00 &    79.00 &    91.00 \\
max:Lactate [Mass/volume] in Blood                                                                                                     &     0.69 &     0.96 &     1.42 &     2.18 &     3.28 \\
min:R-R interval by EKG                                                                                                                &   556.00 &   667.00 &   800.00 &   923.00 &  1053.00 \\
last\_meas:Thyroid stimulating hormone measurement                                                                                      &     0.71 &     1.10 &     1.64 &     2.41 &     3.40 \\
min:Vancomycin [Mass/volume] in Serum or Plasma --trough                                                                               &     3.10 &     5.40 &     8.30 &    12.00 &    16.48 \\
std:Oxygen [Partial pressure] in Venous blood                                                                                          &     0.21 &     2.48 &     7.07 &    14.62 &    28.24 \\
max:Creatinine measurement                                                                                                             &     0.68 &     0.80 &     1.00 &     2.98 &  2100.00 \\
max:Monocytes [\#/volume] in Blood by Manual count                                                                                      &     0.20 &     0.40 &     0.70 &     1.30 &     2.17 \\
max:Body height                                                                                                                        &    61.22 &    63.00 &    66.00 &    69.02 &    72.00 \\
last\_meas:INR in Platelet poor plasma by Coagulation assay                                                                             &     1.00 &     1.00 &     1.10 &     1.20 &     1.40 \\
cond:Atrial fibrillation                                                                                                               &     0.00 &     0.00 &     0.00 &     0.00 &     0.00 \\
mean:Anion gap in Serum or Plasma                                                                                                      &     5.00 &     6.22 &     7.73 &     9.00 &    11.00 \\
count\_not\_na:Low density lipoprotein measurement                                                                                       &     1.00 &     1.00 &     1.00 &     1.00 &     1.00 \\
mean:Diastolic blood pressure                                                                                                          &    63.50 &    69.00 &    75.00 &    81.00 &    87.00 \\
last\_meas:Prothrombin time (PT)                                                                                                        &    11.60 &    12.50 &    13.50 &    14.50 &    16.70 \\
last\_meas:Creatinine measurement                                                                                                       &     0.67 &     0.80 &     1.00 &     1.92 &  1980.00 \\
max:Cholesterol [Mass/volume] in Serum or Plasma                                                                                       &   138.00 &   160.00 &   186.00 &   215.00 &   244.00 \\
std:Urea nitrogen [Mass/volume] in Serum or Plasma                                                                                     &     0.58 &     1.41 &     2.79 &     4.24 &     6.45 \\
min:Creatinine [Mass/volume] in Serum or Plasma                                                                                        &     0.60 &     0.70 &     0.82 &     1.00 &     1.20 \\
mean:Body weight                                                                                                                       &  1929.47 &  2208.00 &  2616.00 &  3100.55 &  3625.07 \\
min:Creatinine measurement                                                                                                             &     0.66 &     0.80 &     1.00 &     1.85 &  1904.90 \\
max:Urea nitrogen [Mass/volume] in Serum or Plasma                                                                                     &     9.00 &    12.00 &    15.00 &    20.00 &    27.00 \\
min:Creatine kinase [Enzymatic activity/volume] in Serum or Plasma                                                                     &    31.00 &    54.00 &    93.00 &   183.00 &   477.00 \\
count\_not\_na:Glucose measurement, blood                                                                                                &     1.00 &     1.00 &     1.00 &     3.00 &     7.00 \\
max:T wave axis                                                                                                                        &     4.00 &    20.00 &    39.00 &    58.00 &    86.00 \\
mean:Calcium [Mass/volume] in Serum or Plasma                                                                                          &     8.28 &     8.65 &     9.00 &     9.35 &     9.60 \\
last\_meas:Leukocytes [\#/volume] in Specimen by Automated count                                                                         &     4.50 &     5.60 &     7.00 &     9.00 &    11.60 \\
mean:Venous oxygen saturation measurement                                                                                              &    54.15 &    68.00 &    76.42 &    85.60 &    93.50 \\
min:Body height                                                                                                                        &    61.00 &    63.00 &    66.00 &    69.00 &    72.00 \\
drug:hydrochlorothiazide 25 MG Oral Tablet                                                                                             &     0.00 &     0.00 &     0.00 &     0.00 &     0.00 \\
min:Pulse rate                                                                                                                         &    55.00 &    61.00 &    69.00 &    78.00 &    88.00 \\
std:QRS axis                                                                                                                           &     2.08 &     4.24 &     8.33 &    14.59 &    24.75 \\
min:Left ventricular Ejection fraction                                                                                                 &    52.24 &    57.10 &    61.70 &    66.00 &    70.00 \\
count\_not\_na:Calcium [Mass/volume] in Serum or Plasma                                                                                  &     1.00 &     1.00 &     2.00 &     4.00 &    11.00 \\
max:Systolic blood pressure                                                                                                            &   109.00 &   118.00 &   129.00 &   142.00 &   156.00 \\
mean:Measurement of venous partial pressure of carbon dioxide                                                                          &    34.05 &    38.60 &    43.10 &    47.40 &    51.50 \\
min:Hemoglobin level estimation                                                                                                        &    10.00 &    11.80 &    13.10 &    14.30 &    15.30 \\
last\_meas:QRS duration                                                                                                                 &    76.00 &    82.00 &    90.00 &    98.00 &   110.00 \\
count\_not\_na:Urea nitrogen [Mass/volume] in Serum or Plasma                                                                            &     1.00 &     1.00 &     2.00 &     4.00 &    11.00 \\
min:Leukocytes [\#/volume] in Blood by Automated count                                                                                  &     3.78 &     4.35 &     4.97 &     7.30 &    10.60 \\

last\_meas:Alkaline phosphatase [Enzymatic activity/volume]   &  \multirow{2}{*}{50.00} &    \multirow{2}{*}{61.00} &    \multirow{2}{*}{78.00} &    \multirow{2}{*}{99.00} &   \multirow{2}{*}{127.00} \\
\hspace{.5cm} in Serum or Plasma &     &    &   &  &   \\
min:P wave axis                                                                                                                        &     0.00 &    23.00 &    43.00 &    59.00 &    70.00 \\
last\_meas:Platelets [\#/volume] in Blood by Automated count                                                                             &   153.00 &   191.00 &   233.00 &   281.00 &   338.00 \\
mean:Leukocytes [\#/volume] in Blood by Automated count                                                                                 &     4.00 &     4.46 &     5.12 &     8.10 &    11.50 \\
max:Monocytes/100 leukocytes in Blood by Automated count                                           &     5.10 &     6.50 &     8.10 &    10.40 &    13.40 \\
max:Thyroid stimulating hormone measurement                                                                                            &     0.73 &     1.12 &     1.67 &     2.46 &     3.50 \\
max:Lymphocytes [\#/volume] in Blood by Automated count                                                                                 &     1.00 &     1.39 &     1.85 &     2.42 &     3.22 \\
min:Alkaline phosphatase [Enzymatic activity/volume] in Serum or Plasma                                                                &    46.00 &    57.00 &    72.00 &    90.00 &   113.00 \\
last\_meas:Erythrocyte distribution width [Ratio] by Automated count                                                                    &    12.50 &    13.00 &    13.60 &    14.40 &    15.90 \\
mean:Body mass index (BMI) [Ratio]                                                                                                     &    20.76 &    23.02 &    26.11 &    30.12 &    35.01 \\
count\_not\_na:P wave axis                                                                                                               &     1.00 &     1.00 &     1.00 &     2.00 &     4.00 \\
mean:Pulse rate                                                                                                                        &    61.00 &    67.80 &    75.00 &    83.33 &    91.80 \\
count\_not\_na:Hemoglobin level estimation                                                                                               &     1.00 &     1.00 &     1.00 &     1.00 &     4.00 \\
mean:Fractional oxyhemoglobin in Arterial blood                                                                                        &    94.30 &    96.25 &    97.30 &    97.90 &    98.40 \\
mean:Lymphocytes/100 leukocytes in Blood by Automated count                                        &     9.78 &    15.40 &    23.20 &    30.90 &    37.30 \\
age                                                                                                                                    &    26.00 &    34.00 &    48.00 &    62.00 &    73.00 \\
max:Body surface area                                                                                                                  &     1.57 &     1.70 &     1.88 &     2.08 &     2.27 \\
mean:QRS duration                                                                                                                      &    77.89 &    82.67 &    90.00 &    98.00 &   109.00 \\
mean:T wave axis                                                                                                                       &     1.00 &    16.00 &    35.00 &    52.00 &    73.50 \\
mean:MCHC [Mass/volume] by Automated count                                                                                             &    32.70 &    33.30 &    33.90 &    34.40 &    34.80 \\
min:INR in Platelet poor plasma by Coagulation assay                                                                                   &     1.00 &     1.00 &     1.10 &     1.10 &     1.20 \\
std:Bilirubin.direct [Mass/volume] in Serum or Plasma                                                                                  &     0.00 &     0.00 &     0.05 &     0.14 &     0.71 \\
mean:Cholesterol [Mass/volume] in Serum or Plasma                                                                                      &   135.93 &   157.00 &   182.00 &   209.00 &   235.00 \\
last\_meas:Body weight                                                                                                                  &  1923.20 &  2208.00 &  2620.80 &  3104.00 &  3632.00 \\
mean:Body surface area                                                                                                                 &     1.56 &     1.68 &     1.86 &     2.06 &     2.25 \\
max:MCHC [Mass/volume] by Automated count                                                                                              &    32.90 &    33.60 &    34.20 &    34.80 &    35.30 \\
min:Sodium [Moles/volume] in Serum or Plasma                                                                                           &   132.00 &   135.00 &   137.00 &   139.00 &   141.00 \\
min:Aspartate aminotransferase [Enzymatic activity/volume] in Serum or Plasma                                                          &    13.00 &    17.00 &    21.00 &    27.00 &    37.00 \\
last\_meas:Systolic blood pressure                                                                                                      &   104.00 &   112.00 &   122.00 &   134.00 &   147.00 \\
max:Q-T interval corrected                                                                                                             &   395.00 &   408.00 &   423.00 &   442.00 &   463.20 \\
min:MCHC [Mass/volume] by Automated count                                                                                              &    32.20 &    32.90 &    33.60 &    34.20 &    34.60 \\
max:Basophils [\#/volume] in Blood by Automated count                                                                                   &     0.01 &     0.02 &     0.04 &     0.06 &     0.10 \\
last\_meas:QRS axis                                                                                                                     &   -28.00 &    -2.00 &    27.00 &    55.00 &    74.00 \\
max:Myoglobin [Presence] in Serum or Plasma                                                                                            &    25.00 &    32.00 &    46.00 &    82.00 &   194.00 \\
last\_meas:Anion gap in Serum or Plasma                                                                                                 &     5.00 &     6.00 &     8.00 &     9.00 &    11.00 \\
mean:Chloride measurement, blood                                                                                                       &   104.00 &   106.73 &   109.75 &   112.33 &   114.67 \\
last\_meas:Body temperature                                                                                                             &    97.20 &    97.60 &    98.00 &    98.40 &    98.70 \\
std:Prothrombin time (PT)                                                                                                              &     0.14 &     0.41 &     0.80 &     1.68 &     4.62 \\
max:Thyroxine (T4) free [Mass/volume] in Serum or Plasma                                                                               &     0.90 &     1.00 &     1.20 &     1.37 &     1.60 \\
last\_meas:T wave axis                                                                                                                  &    -1.00 &    16.00 &    35.00 &    53.00 &    74.00 \\
std:Cholesterol.total/Cholesterol in HDL [Mass Ratio] in Serum or Plasma                                                               &    43.99 &    61.17 &    79.08 &    97.72 &   116.04 \\
min:QRS duration                                                                                                                       &    76.00 &    82.00 &    88.00 &    96.00 &   106.00 \\
min:Chloride [Moles/volume] in Serum or Plasma                                                                                         &    97.00 &   100.00 &   102.00 &   104.00 &   106.00 \\
mean:Creatinine measurement                                                                                                            &     0.67 &     0.80 &     1.00 &     2.80 &  1980.00 \\
min:Thyrotropin [Units/volume] in Serum or Plasma                                                                                      &     0.48 &     0.89 &     1.39 &     2.10 &     3.06 \\
drug:1 ML hydromorphone hydrochloride 2 MG/ML Injection                                                                                &     0.00 &     0.00 &     0.00 &     0.00 &     0.00 \\
max:Body weight                                                                                                                        &  1964.74 &  2243.40 &  2663.16 &  3160.51 &  3700.20 \\
mean:Q-T interval corrected                                                                                                            &   392.00 &   405.00 &   419.50 &   436.00 &   453.50 \\
min:aPTT in Platelet poor plasma by Coagulation assay                                                                                  &    25.10 &    27.20 &    29.80 &    32.80 &    36.80 \\
max:Creatinine [Mass/volume] in Serum or Plasma                                                                                        &     0.70 &     0.80 &     0.97 &     1.13 &     1.40 \\
mean:Glomerular filtration rate/1.73 sq M.predicted  &  \multirow{2}{*}{55.00} &    \multirow{2}{*}{60.00} &    \multirow{2}{*}{60.00} &    \multirow{2}{*}{60.00} &   \multirow{2}{*}{60.00} \\
\hspace{.5cm}[Volume Rate/Area] in Serum, Plasma or Blood by CKD-EPI &     &    &   &  &   \\
min:Hematocrit [Volume Fraction] of Blood                                                                                              &    30.00 &    35.00 &    38.70 &    42.00 &    45.00 \\
last\_meas:Globulin [Mass/volume] in Serum                                                                                              &     2.50 &     2.90 &     3.40 &     3.80 &     4.20 \\
min:Body temperature                                                                                                                   &    96.80 &    97.30 &    97.70 &    98.10 &    98.50 \\
last\_meas:Creatinine [Mass/volume] in Serum or Plasma                                                                                  &     0.64 &     0.75 &     0.90 &     1.10 &     1.30 \\
min:Eosinophils [\#/volume] in Blood by Automated count                                                                                 &     0.00 &     0.02 &     0.08 &     0.16 &     0.30 \\
max:Body mass index (BMI) [Ratio]                                                                                                      &    21.10 &    23.44 &    26.63 &    30.78 &    35.87 \\
drug:fentanyl 0.05 MG/ML Injection                                                                 &     0.00 &     0.00 &     0.00 &     0.00 &     1.00 \\
drug:2 ML ondansetron 2 MG/ML Injection                                                                                                &     0.00 &     0.00 &     0.00 &     0.00 &     1.00 \\
last\_meas:Urea nitrogen [Mass/volume] in Serum or Plasma                                                                               &     8.00 &    11.00 &    14.00 &    17.00 &    22.00 \\
min:Eosinophils/100 leukocytes in Blood by Automated count                                         &     0.00 &     0.20 &     1.00 &     2.10 &     3.70 \\
\bottomrule
\label{tab:cohort_stats_all_fts}
\end{longtable}
\end{center}



\end{document}